\newcommand{\inR}{\in \mathbb{R}}
\newcommand{\R}{ \mathbb{R}}
\newcommand{\N}{ \mathbb{N}}
\newcommand{\eqdef}{\stackrel{\vartriangle}{=}}
\newcommand{\Lop}{{\rm L}}
\newcommand{\Dop}{{\rm D}}
\newcommand{\dint}{{\rm d}}
\newcommand{\Fourier}{ \mathcal{F}}
\DeclareMathOperator*{\esssup}{ess\,sup}
\def\V#1{{\boldsymbol{#1}}}         
\def\Spc#1{{\mathcal{#1}}}  
\def\M#1{{\bf{#1}}}  
\def\Op#1{{\mathrm{#1}}}  
\def\ee{\mathrm{e}} 
\def\jj{\mathrm{i}} 
\def\Indic{\mathbbm{1}} 
\def\One{\mathbbm{1}}
\newcommand{\eq}[1]{{\rm(\ref{#1})}}
\newcommand{\embedC}{\xhookrightarrow{}}
\providecommand{\revise}[1]{#1}
\begin{document}

\title{A representer theorem for deep neural networks}


\editor{}

\author{\name {Michael Unser} \email {michael.unser@epfl.ch}\\
\addr Biomedical Imaging Group,\\ \'Ecole Polytechnique F\'ed\'erale de Lausanne (EPFL),\\ CH-1015 Lausanne, Switzerland
}




\maketitle

\begin{abstract}

We propose to optimize the activation functions of a deep neural network by adding a corresponding functional regularization to the cost function. We justify the use of a second-order total-variation criterion. This allows us to derive a general representer theorem for deep neural networks that makes a direct connection with splines and sparsity. Specifically, we show that the optimal network configuration
can be achieved with activation functions that are nonuniform linear splines with adaptive knots.
The bottom line is that the action of each neuron is encoded by a spline whose parameters (including the number of knots) are optimized during the training procedure. The scheme results in a computational structure that is compatible with existing deep-ReLU, parametric ReLU, APL (adaptive piecewise-linear) and MaxOut architectures.
It also suggests novel optimization challenges, while making the link with $\ell_1$ minimization and sparsity-promoting techniques
explicit.

\end{abstract}

\begin{keywords}
splines, regularization, sparsity, learning, deep neural networks, activation functions
\end{keywords}

\section{Introduction}

The basic regression problem in machine learning is to find a parametric representation 
of a function $f: \R^N \to \R$ given a set of data points $(\V x_m,y_m)\inR^{N +1}$ such that $f(\V x_m)$ is close to $y_m$
for $m=1,\dots,M$ in an appropriate sense \citep{Bishop2006}. Classically, there are two steps involved. The first is the {\em design}, which can be 
abstracted in the choice of a given parametric class of functions $\V x \mapsto f(\V x|\V \theta)$, where $\V \theta$ encodes the parameters. For instance,
$f(\V x|\V \theta)$ could be a neural network with weights $\V \theta$.
The second is the {\em training}, which basically amounts to an interpolation/approximation problem where the chosen model is fit to the data. In practice, the optimal parameter $\V \theta_0$ is determined via the functional minimization
\begin{align}
\label{Eq:training}
\V \theta_0=\arg \min_{\V \theta} \sum_{m=1}^M E\big(y_m,f(\V x_m|\V \theta)\big),
\end{align}
where $E: \R \times \R \to \R^+$ is a convex error function that quantifies the discrepancy of the fit to the data.
A classical choice is $E\big(y_m,f(\V x_m|\V \theta)\big)=|y_m-f(\V x_m|\V \theta)|^2$, which yields the least-squares solution.

The most delicate step is the design, because it has to deal with two conflicting requirements.
First is the desire for universality, meaning that
the parametric model $f(\V x|\V \theta)$ 
should be flexible enough 
to allow for the faithful representation of a large class of functions---ideally, the complete family of continuous functions
$\R^N\to \R$, as the dimensionality of $\V \theta$ goes to infinity. Second is the quest for parsimony, 
meaning that the model should have a small number of parameters, which leads to an increase in robustness and trustworthiness.

This work aims at unifying the design of neural networks based on 
variational principles inspired by kernel methods. To set up the stage, we now briefly review 
the two relevant approaches to supervised learning. 
\subsection{Kernel methods}
A kernel estimator is a linear model with adjustable parameters $\V \theta=(a_1,\dots,a_M)\in\R^M$ 
and predefined data centers $\V x_1, \dots,\V x_M \in \R^N$ of the form
\begin{align}
\label{Eq:Kernelf}
f(\V x|\V \theta)=\sum_{m=1}^M a_m h(\V x,\V x_m),
\end{align}
where $\V x \inR^N$ is the input variable of the model and where $h: \R^N \times \R^N \to \R$ is a positive-definite kernel, a preferred choice being the Gaussian kernel $h(\V x,\V y)=\ee^{-\tfrac{1}{2}\|\V x-\V y\|^2/\sigma^2}$ \citep{Hofmann2008, Alvarez2012}.
This expansion is at the heart of the whole class of kernels methods, including radial-basis functions and support-vector machines \citep{Scholkopf1997,Vapnik2013,Scholkopf2002}. 

The elegance of kernel estimators lies in that they can be justified based on regularization theory \citep{Poggio1990,Evgeniou2000, Poggio2003}. 
The incentive there is to remove some of the arbitrariness of model selection by formulating the learning task as a global minimization problem that takes care of the design and training jointly. 
The property that makes such an integrated approach feasible is that any 
Hilbert space $\Spc H$ of continuous functions on $\R^N$ has a unique {\em reproducing kernel} $h_\Spc H: \R^N \times \R^N \to \R$ such that
({\em i}) $h_\Spc H(\cdot,\V x_m) \in \Spc H$; and ({\em ii}) $\langle f,h_\Spc H(\cdot,\V x_m)\rangle_{\Spc H}=f(\V x_m)$ 
for any $\V x_m \in \R^N$ and $f \in \Spc H$ \citep{Aronszajn1950}. The idea, then, is
to formulate the
``regularized'' version of Problem \eq{Eq:training} as
\begin{align}
\label{Eq:RKHS}
 f_{\rm RKHS}=\arg \min_{f \in \Spc H} \left(\sum_{m=1}^M E\big(y_m,f(\V x_m)\big) + \lambda \|f\|^2_{\Spc H}\right),
\end{align}
where the second term penalizes solutions with a large $\|\cdot\|_\Spc H$-norm and  $\lambda\inR^+$ is an adjustable tradeoff factor.
Under the assumption that the loss function $E$ is convex, the representer theorem \citep{Kimeldorf1971c,Scholkopf2001,Scholkopf2002} then states that the solution of \eq{Eq:RKHS}
exists, is unique, and such that  $f_{\rm RKHS}\in {\rm  span}\{h_\Spc H(\cdot,\V x_m)\}_{m=1}^M$. This ultimately results in the same linear expansion as \eq{Eq:Kernelf}. The argument also applies the other way round since any positive-definite kernel $h$ 
specifies a unique reproducing-kernel Hilbert space (RKHS) $\Spc H_h$, which then provides the regularization functional $\|f\|_{\Spc H_h}^2$ in \eq{Eq:RKHS} that is matched to the kernel estimator specified by \eq{Eq:Kernelf}. 

The other remarkable feature  of kernel expansions is their {\em universality}, under mild conditions on $h$ \citep{Micchelli2006}.
In other words, one has the guarantee that the generic linear model of \eq{Eq:Kernelf} can reproduce {\em any} continuous function $f :\R^N\to \R$ to a desired degree of accuracy by including sufficiently many centers, with the error vanishing as $M\to \infty$. Moreover, because of the tight connection between kernels, RKHS,  and splines \citep{deBoor1966, Micchelli1986,Wahba1990}, one can invoke standard results in approximation theory to obtain quantitative estimates of the approximation error of smooth functions as a function of $M$ and of the widest gap between data centers \citep{Wendland2005}.
Finally, there is a well-known link between kernel methods derived from regularization theory and neural networks, albeit ``shallow'' ones that involve a single nonlinear layer \citep{Poggio1990}.

\subsection{Deep neural networks}
While kernel methods have been a major (and winning) player in machine learning since the mid '90s, they have been recently outperformed by deep neural networks (DNNs) in many real-world applications such as
image classification \citep{Krizhevsky2012}, speech recognition \citep{Hinton2012}, and image segmentation \citep{Ronneberger2015}. 

The leading idea of deep learning is to build more powerful learning architectures via the stacking/composition of simpler entities (see the review papers by LeCun, Bengio and Hinton \citep{LeCun2015} and Schmidhuber \citep{Schmidhuber2015} and the recent textbook \citep{Goodfellow2016} for more detailed explanations).
In this work, we focus on the popular class of feedforward
networks that involve a layered composition of affine transformations (linear weights) and pointwise nonlinearities.
The deep structure of such a network is specified by its {\em node descriptor} $(N_0,N_1, \dots,N_L)$ where $L$ is the total number of layers (depth of the network) and $N_\ell$ is the number of neurons at the $\ell$th layer. 
The action of a (scalar) neuron (or node) indexed by $(n,\ell)$ is described by the relation
$\sigma(\M w_{n,\ell}^T\V x - b_{n,\ell})$ where $\V x\inR^{N_{\ell-1}}$ denotes the multivariate input of the neuron, 
$\sigma: \R \to \R$ is a predefined activation function (such as a sigmoid or a ReLU=rectified linear unit), $\M w_{n,\ell} \inR^{N_{\ell-1}}$ a set of linear weights, and $b_{n,\ell}\inR$ an additive bias. The outputs of layer $\ell$ are then fed as inputs of layer $(\ell+1)$, and so forth for $\ell=1,\dots,L$.

To obtain a global description, we group the neurons within a given layer $\ell$ 
and specify the two corresponding vector-valued maps:
\begin{enumerate}
\item Linear step $\V f_\ell: \R^{N_{\ell-1}} \to \R^{N_{\ell}}$ (affine transformation)
\begin{align}
\label{eq:CPWL}
\V f_\ell: \V x\mapsto \V f_\ell(\V x)=\M W_\ell\V x - \M b_{\ell} 
\end{align}
with weighting matrix $\M W_\ell=[\M w_{1,\ell} \cdots \M w_{N_\ell,\ell}]^T\inR^{N_\ell \times N_{\ell-1}}$ and bias vector $\M b_{\ell}=(b_{1,\ell}, \dots,b_{N_\ell,\ell}) \inR^{N_\ell}$.
\item Nonlinear step $\V \sigma_\ell : \R^{N_\ell} \to \R^{N_\ell}$ (activation functions)
\begin{multline}
\V \sigma_\ell :\V x=(x_1,\dots,x_{N_\ell}) 
\mapsto \V \sigma_\ell(\V x)=\big(\sigma_{1,\ell}(x_1),\dots,\sigma_{N_\ell,\ell}(x_{N_\ell})\big)
\end{multline}
with the possibility of adapting the scalar activation functions $\sigma_{n,\ell}$ on a per-node basis.
\end{enumerate}
This allows us to describe the overall action of the full $L$-layer deep network by 
\begin{align}
\label{eq:DeepNet}
\M f_{\rm deep}(\V x)=\left(\V \sigma_L \circ \V f_L \circ \V \sigma_{L-1} \circ \cdots \circ \V \sigma_2 \circ \V f_2 \circ \V \sigma_1 \circ \V f_1\right)(\V x),
\end{align}
which makes its compositional structure explicit. The design step therefore consists in fixing the architecture of the deep neural net: One must specify $(N_0,N_1, \dots,N_L)$ together with the activation functions $\V \sigma_{\ell}: \R^{N_\ell} \to \R^{N_\ell}$. The activations are traditionally chosen to be not only the same for all neurons within a layer, but also the same across layers. This results in a computational structure  with adjustable parameters $\V \theta=(\M W_1, \dots, \M W_L, \M b_1,\dots,\M b_L)$ (weights of the linear steps). These are then set during training via the minimization of \eq{Eq:training}, which 
is achieved by stochastic gradient descent with efficient error backpropagation \citep{Rumelhart1986}.

While researchers have considered a variety of possible activation functions, such as the traditional sigmoid, a preferred choice that has emerged over the years is the rectified linear unit: ${\rm ReLU}(x)\eqdef \max(x,0)$ \citep{Glorot2011}.
The reasons that support this choice are multiple. The initial motivation was to promote sparsity (in the sense of decreasing the number of active units), capitalizing on the property that ReLU acts as a gate and works well in combination with $\ell_1$-regularization \citep{Glorot2011}.
Second is the empirical observation that the training of very deep networks is much faster 
if the hidden layers are composed of ReLU activation functions \citep{LeCun2015}. 
Last but not least is the connection between deep ReLU networks and splines---to be further developed in this paper. 

A key observation 
is that a deep ReLU network implements a multivariate input-output relation that is continuous and piecewise-linear (CPWL) \citep{Montufar2014}. 
This remarkable property is due to the ReLU itself being a linear spline, which has prompted Poggio {\em et al.}\ to interpret deep neural networks as hierarchical splines \citep{Poggio2015}. Moreover, it has been shown that any CPWL function admits a deep ReLU implementation
\citep{Wang2005, Arora2016}, which is quite significant since the CPWL family has universal approximation properties. 

The ability of splines to effectively represent arbitrary (univariate) functions \citep{deBoor1978,Schumaker1981,Unser1999} has also been exploited at the more local level of a neuron/node in a network. Several authors have proposed to use spline-related parametric models to optimize the shape of neural activation units. Existing designs include B-spline receptive fields \citep{Lane1991}, Catmul-Rom splines \citep{Vecci1998}, cubic spline activations \citep{Guarnieri1999}, adaptive piecewise-linear (APL) units \citep{Agostinelli2015}, and smooth piecewise-polynomial
functions \citep{Hou2017}.

\subsection{Road map}
Our purpose in this paper is to strengthen the connection between splines and multilayer ReLU networks even further.
To that end, 
we formulate the design of a deep neural network globally within the context of regularization theory, in direct analogy with the variational formulation of kernel estimators given by \eq{Eq:RKHS}. 
The critical aspect, of course, is the selection of an appropriate regularization functional which, for reasons that will be exposed next, will take us outside of the traditional realm of RKHS.

Having set the deep architecture of the neural network, we then formulate
the training as a global optimization task whose outcome is a combined set of optimal neuronal activation functions and linear weights.
The foundational role of the representer theorem (Theorem \ref{Theo:CNNRep}) is that it will provide us with the parametric representation
of the optimal activations, which can then be 
leveraged for obtaining a numerical implementation that is compatible with current architectures; in particular, the popular deep RELU networks.


\section{From deep neural networks to deep splines}

Given the generic structure of a deep neural network, we are interested in investigating the possibility of optimizing the shape of the activation function(s)
on a node-by-node basis. We now show how this can be achieved within the context of infinite-dimensional regularization theory.
\subsection{Choice of regularization functional}
For practical relevance, the scheme should favor simple solutions such as an identity or a linear scaling. This will retain
the possibility of performing a classical linear regression. 
It is also crucial that
the activation function $\sigma$ be differentiable to be compatible with the chain rule when the backpropagation algorithm is used to train the network.
Lastly, we want to promote activation functions that are locally linear (such as the ReLU) since these appear to work best in practice. 
\revise{If the two aforementioned constraints are satisfied, then the activation function is CPWL.
As this property is conserved through (multivariate) composition, it implies that the resulting map $\V f_{\rm deep}: \R^{N_0} \to \R^{N_L}$ is CPWL as well, which is highly desirable for applications  \cite[]{Strang2018}.}
 Hence, an idealized solution would be a function $\sigma: \R \to \R$  whose second derivative vanishes almost everywhere. 

As measure of sparsity, we use the ``total-variation'' norm $\|\cdot\|_\Spc M$ associated with the Banach space
\begin{align}
\label{Eq:Mbyduality}
\Spc M(\R)=\{f \in \Spc S'(\R): \|f\|_{\Spc M}\eqdef{ \sup_{\varphi \in \Spc S(\R): \|\varphi\|_{\infty}\le 1}\langle f ,\varphi\rangle}<\infty\}
\end{align}
where $\Spc S'(\R)$ is Schwartz's space of tempered distributions, which is the continuous dual of $\Spc S(\R)$ (the space of smooth and rapidly-decreasing test functions on $\R$). Note that our definition of $\|\c[ot\|_\Spc M$ (by duality) is equivalent to the notion of total-variation used in measure theory  \citep{Rudin1987}. The critical point for us is that the latter is a slight extension of the $L_1$-norm:
The basic property is that $\|f\|_{L_1}\eqdef \int_{\R}|f(x)|\dint x=\|f\|_{\Spc M}$ for any $f \in L_1(\R)$, which implies that
$L_1(\R) \subseteq \Spc M(\R)$. However, the shifted Dirac distribution $\delta(\cdot-x_m) \notin L_1(\R)$ for any shift $x_m\inR$, while $\delta(\cdot-x_m) \in \Spc M(\R)$ with $\|\delta(\cdot-x_m)\|_{\Spc M}=1$, which shows that the space $\Spc M(\R)$ is (slightly) larger than $L_1(\R)$.

To favor neuronal activation functions $\sigma: \R \to \R$ with ``sparse'' second derivatives, we shall therefore impose a bound on their second total-variation, which is defined as
$$
{\rm TV}^{(2)}(\sigma)\eqdef \|\Dop^2 \sigma\|_{\Spc M}={ \sup_{\varphi \in \Spc S(\R): \|\varphi\|_{\infty}\le 1} }
\langle \Dop^2 \sigma ,\varphi\rangle
$$
where $\Op D^2=\frac{\dint^2}{\dint x^2}$ is the second derivative operator.
The connection with ReLU is that $\Dop^2 \{{\rm ReLU}\}=\delta$, which confirms that the ReLU activation function is intrinsically sparse with  ${\rm TV}^{(2)}({\rm ReLU})=1$.

Since our formulation involves a joint optimization of all network components, it is important to decouple the effect of the various stages. The only operation that is common to linear transformations and pointwise nonlinearities is a linear scaling, which is therefore transferable from one level to the next.
Since most regularization schemes are scale-sensitive, it is
essential to prevent such a transfer. We achieve this by restricting the class of admissible weight vectors $\M w_{n,\ell}$ acting on a given node indexed by $({n,\ell})$ to those that have a unit norm. In other words, we shall normalize the scale of all linear modules with the introduction of the new variable 
$\M u_{n,\ell}=\M w_{n,\ell}/\|\M w_{n,\ell}\|$. 

\subsection{Supporting optimality results}
As preparation for our representer theorem, we present a lemma on the ${\rm TV}^{(2)}$-optimality of piecewise-linear interpolation. This enabling result is deduced
from the general spline theory presented in \citep{Unser2017}, as detailed in the appendix\footnote{
As it turns out, the non-obvious part is to actually prove that the required hypotheses 
are met; in particular, the weak* continuity of the dirac functionals in the topology specified by \eqref{Eq:BV2norm}.}. We then provide arguments to disqualify the use of the more conventional  Sobolev-type regularization.

The formal definition of our native space (i.e., the space over which the optimization is performed) is
\begin{align}
\label{Eq:Native}
{\rm BV}^{(2)}(\R)=\{f: \R \to \R: \|\Dop^2 f \|_{\Spc M}<\infty\},
\end{align}
which is the class of functions with bounded second total variation. 
As explained in Appendix \ref{App:BVTopology}, we can endow ${\rm BV}^{(2)}(\R)$ with the norm
\begin{align}
\label{Eq:BV2norm}
\|f\|_{{\rm BV}^{(2)}}\eqdef \|\Dop^2 f \|_{\Spc M} + \revise{\sqrt{|f(0)|^2 + |f(1)-f(0)|^2}},\end{align}
which turns it into a bona fide Banach space.
We can then also guarantee that this space 
is large enough---i.e., $\Spc S(\R) \subseteq {\rm BV}^{(2)}(\R) \subseteq \Spc S'(\R)$---to represent any function $f: \R \to \R$ with an arbitrary degree of precision (see explanations after the proof of Theorem 10 in Appendix \ref{App:BVTopology}). The problem of interest is then to search for the optimal interpolant of a series of data points
within that space.

\begin{lemma} [${\rm TV}^{(2)}$-optimality of piecewise-linear interpolants]
\label{Theo:L1spline}
Consider a series of scalar data points $(x_m,y_m), m=1,\dots,M$ with $M>2$ and $x_1\ne x_2$.
Then, under the hypothesis of feasibility (i.e., $y_{m_1}=y_{m_2}$ whenever $x_{m_1}=x_{m_2}$), the extremal points of the interpolation problem
\begin{align*}
\arg \min_{{f \in {\rm BV}^{(2)}(\R)}} \| \Dop^2 f\|_{\Spc M} \quad\mbox{ s.t. }\quad  f(x_m)=y_m, m=1,\dots,M
\end{align*}
are nonuniform splines of degree $1$ (a.k.a. piecewise-linear functions) with no more than $(M-2)$ adaptive knots.
\end{lemma}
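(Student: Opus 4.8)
The plan is to recognize this interpolation problem as a concrete instance of the abstract $\Spc M$-norm regularized recovery problem of \citep{Unser2017}, with regularization operator $\Lop=\Dop^2$ and linear measurement functionals $\nu_m: f\mapsto f(x_m)=\langle \delta(\cdot-x_m),f\rangle$. The operator $\Dop^2$ has the two-dimensional null space $\Spc N_{\Dop^2}=\mathrm{span}\{1,(\cdot)\}$ and the Green's function $\rho(x)=x_+=\max(x,0)$, in the sense that $\Dop^2\{x_+\}=\delta$ and $\Dop^2\{a x+b\}=0$; moreover $\rho$ and its shifts lie in ${\rm BV}^{(2)}(\R)$ because $\|\Dop^2\{(\cdot-\tau)_+\}\|_{\Spc M}=\|\delta(\cdot-\tau)\|_{\Spc M}=1$. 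With these identifications, the abstract representer theorem delivers that the solution set of the problem is nonempty, convex and weak*-compact, and that its extreme points have the form
$$
f^\ast(x)=\sum_{k=1}^{K}a_k\,(x-\tau_k)_+ + b_1 x + b_0,\qquad K\le M-\dim\Spc N_{\Dop^2}=M-2,
$$
for some knots $\tau_k\in\R$ and real coefficients $a_k,b_0,b_1$. Such an $f^\ast$ is precisely a continuous piecewise-linear function with breakpoints at the $\tau_k$, and $\Dop^2 f^\ast=\sum_k a_k\,\delta(\cdot-\tau_k)$, so that $\|\Dop^2 f^\ast\|_{\Spc M}=\sum_k|a_k|$; this is exactly the asserted conclusion.

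What remains is to certify that the standing hypotheses of the abstract theorem are met. First, ${\rm BV}^{(2)}(\R)$ equipped with the norm \eqref{Eq:BV2norm} must be a genuine Banach space on which $\Dop^2$ is a continuous surjection onto $\Spc M(\R)=\big(C_0(\R)\big)'$, with a continuous right inverse and kernel $\Spc N_{\Dop^2}$; this is what the topological analysis in Appendix~\ref{App:BVTopology} establishes. Second, and this is the genuinely delicate point, the point-evaluation functionals $\delta(\cdot-x_m)$ must be weak*-continuous on ${\rm BV}^{(2)}(\R)$, i.e.\ continuous for the topology induced by the predual of $\Spc M(\R)$; since point evaluation is not automatically weak*-continuous on a non-reflexive space, one must exhibit its representer explicitly, using the reconstruction identity that recovers $f$ from $\Dop^2 f$ together with the boundary data $f(0)$ and $f(1)-f(0)$ appearing in the norm. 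Third, the problem must be well-posed over the null space: the $M\times 2$ matrix $\big[\nu_m(p_n)\big]$ with $p_1\equiv 1$ and $p_2(x)=x$ must have rank $2$; feasibility rules out contradictory constraints, and the hypothesis $x_1\ne x_2$ supplies two distinct abscissae, which makes the two columns linearly independent. The assumption $M>2$ then merely ensures $M-2\ge 1$, so the knot budget is non-degenerate.

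I expect the weak*-continuity of the Dirac functionals to be the main obstacle, precisely as flagged in the footnote. Once it is in place, everything else is mechanical: existence and weak*-compactness of the solution set follow from a Banach--Alaoglu argument, and the explicit form of the extreme points together with the sharp bound $K\le M-2$ follows from the finite-dimensional linear-programming duality underlying representer theorems for $\Spc M$-norm regularization. I would therefore devote the appendix to building the predual structure of ${\rm BV}^{(2)}(\R)$ and to proving the continuity of $f\mapsto f(x_m)$ in that topology, and then simply invoke the abstract theorem of \citep{Unser2017} to conclude.
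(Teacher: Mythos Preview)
Your proposal is correct and mirrors the paper's proof almost exactly: the paper likewise deduces the lemma from the abstract representer theorem of \citep{Unser2017} (restated as Theorem~\ref{Theo:L1spline2}) by verifying (i) spline-admissibility of $\Dop^2$ with Green's function $(x)_+$ and null space $\mathrm{span}\{1,x\}$, (ii) weak*-continuity of the Dirac functionals via an explicit predual construction of ${\rm BV}^{(2)}(\R)$ (Appendix~\ref{App:BVTopology} and Proposition~\ref{Prop:samplingContinuous}), and (iii) well-posedness over $\Spc N_{\Dop^2}$ using $x_1\ne x_2$. You have correctly singled out the weak*-continuity as the crux and anticipated that the choice of boundary functionals $(\phi_1,\phi_2)=\big(\delta,-\delta+\delta(\cdot-1)\big)$ in \eqref{Eq:BV2norm} is what makes the predual argument go through.
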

The proof together with the relevant background in functional analysis is given in the Appendix. The feasibility hypothesis in Lemma \ref{Theo:L1spline} is not restrictive since a function returns a single value for each input point. 
We are aware of two antecedents to Lemma \ref{Theo:L1spline} (e.g., \cite[Corollary 2.2]{Fisher1975}, \cite[Proposition 1]{Mammen1997}); these earlier results, however, are not in the form suitable for 
our purpose because they restrict the domain of $f$ to a finite interval. Our result is also more precise because it yields the full solution set (as the convex hull of the extremal points) and gives a stronger bound on the maximum number of knots.

Lemma \ref{Theo:L1spline} implies  that there exists an optimal interpolator, not necessarily unique, whose generic parametric form is given by \begin{align}
\label{Eq:spline}
f_{\rm spline}(x)=b_1 + b_2 x + \sum_{k=1}^K a_k (x-\tau_k)_+
\end{align}
where $(x)_+\eqdef \max(x,0)={\rm ReLU}(x)$, with the 
caveat that the intrinsic spline descriptors, given by the (minimal) number of knots $K$ and the knot locations $\tau_1, \dots,\tau_K \in \R$, are not known beforehand.
This means that these descriptors need to be optimized jointly with the expansion coefficients $\V b=(b_1,b_2)\inR^2$ and $\V a=(a_1,\dots,a_K)\inR^K$. Ultimately, this translates into a solution that has a polygonal graph with breakpoints $f_{\rm spline}(\tau_k), k=1,\dots,K$ and that perfectly interpolates the data points otherwise, as shown in Figure \ref{Fig:Interpol}.
%
%


Since ${\rm TV}^{(2)}$-regularization penalizes 
the variations of the derivative, it will naturally produce (sparse) solutions
with a small number of knots. This means that an optimal spline will typically have fewer knots than there are data points, while the list of its knots $\{\tau_1,\dots,\tau_K\}$ with $K<M$ may not necessarily be a subset of $\{x_1,\dots,x_M\}$, as illustrated in Figure \ref{Fig:Interpol}.
This push towards model simplification (Occam's razor) is highly desirable. It distinguishes this formulation of splines from the more conventional one, which, in the case of interpolation, simply tells us ``to
connect the dots'' with $K=M$ and
$\tau_m=x_m$ for $m=1,\dots,M$ (see 
the solid-line illustration in Figure \ref{Fig:Interpol}).

\begin{figure}
\centering
\includegraphics[width=10cm]{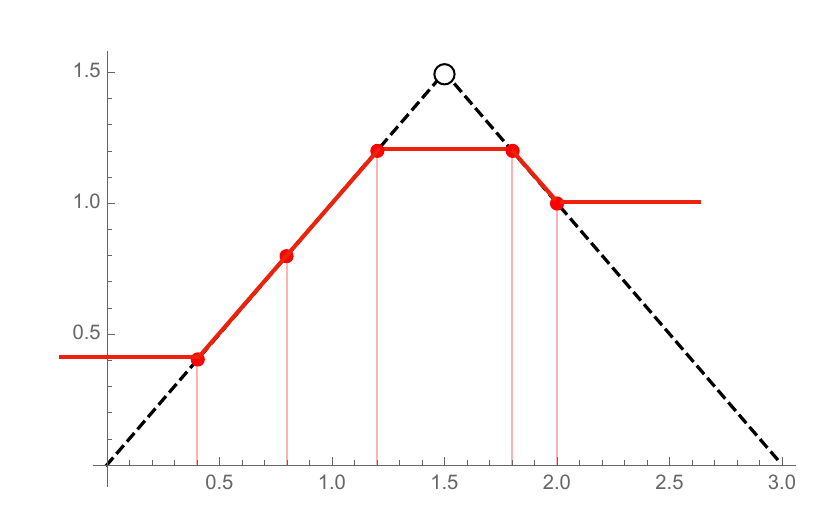}
\caption{\label{Fig:Interpol} \small Conventional (solid line) vs.\ sparse (dashed line) piecewise-linear interpolants.
The 5 data points are shown as red dots; they coincide with the knots of the conventional interpolant.
The sparse solution, by contrast, has a single knot at $\tau_1=1.5$ (circle), an argument value that is found in none of the data points.}
\end{figure}

It is well known that the classical linear interpolator is the solution of the following variational problem, which we like to see as the precursor of RKHS kernel methods  \citep{Prenter:1975,Wahba1990}.

\begin{proposition}[Sobolev optimality of piecewise-linear interpolation]
\label{Theo:L2spline}
Let the native space be the first-order Sobolev space $H^1(\R)=\{f: \R \to \R: \|\Dop f\|^2_{L_2}+|f(0)|^2<\infty\}$.
Given a series of distinct data points $(x_m,y_m), m=1,\dots,M$, the interpolation problem
\begin{align*}
\arg \min_{{f \in H^1(\R)}} 
{\int_\R |\Dop f(x)|^2\dint x} 
\mbox{  s.t.  } \ f(x_m)=y_m, \ m=1,\dots,M
\end{align*}
has a unique piecewise-linear solution that can be written as
\begin{align}
\label{Eq:spline2}
s_2(x)=b_1 + \sum_{m=1}^M a_m (x-x_m)_+.
\end{align}
\end{proposition}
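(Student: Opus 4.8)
The plan is to obtain existence, uniqueness and the explicit parametric form all at once through a Pythagorean (first--integral) argument, the classical device for interpolating splines. First I would introduce the natural candidate $s_2$: the continuous piecewise-linear function that interpolates the data $(x_m,y_m)$, whose only breakpoints lie among the (distinct) abscissae $x_1,\dots,x_M$, and which is constant on each of the two unbounded intervals $(-\infty,\min_m x_m)$ and $(\max_m x_m,+\infty)$. Such an $s_2$ is bounded, and $\Dop s_2$ is piecewise constant with support inside the compact interval $[\min_m x_m,\max_m x_m]$; hence $\Dop s_2\in L_2(\R)$ and $s_2\in H^1(\R)$. Denoting by $b_1$ the constant value of $s_2$ on the left tail and by $a_m$ the jump of its slope at $x_m$, one recovers the announced form $s_2(x)=b_1+\sum_{m=1}^M a_m(x-x_m)_+$ (the integrability $\Dop s_2\in L_2$ forces $\sum_m a_m=0$, i.e.\ a flat right tail, but this plays no role below).

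The core step is the first integral relation. Let $f\in H^1(\R)$ be any admissible competitor, $f(x_m)=y_m$ for all $m$ --- point evaluation being well defined since $H^1(\R)$ embeds continuously into the space of continuous functions --- and set $g=f-s_2\in H^1(\R)$, so that $g(x_m)=0$ for $m=1,\dots,M$. Expanding the seminorm, $\int_\R|\Dop f|^2=\int_\R|\Dop s_2|^2+2\int_\R\Dop s_2\,\Dop g+\int_\R|\Dop g|^2$. I would then show the cross term vanishes: since $\Dop s_2$ is constant on each interval between consecutive knots and vanishes outside $[\min_m x_m,\max_m x_m]$, integrating by parts on each of these finitely many subintervals and telescoping gives $\int_\R\Dop s_2\,\Dop g=-\sum_{m=1}^M a_m\,g(x_m)=0$, the boundary terms at $\pm\infty$ being zero because $\Dop s_2$ has compact support and $g$ is bounded. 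Hence $\int_\R|\Dop f|^2=\int_\R|\Dop s_2|^2+\int_\R|\Dop g|^2\ge\int_\R|\Dop s_2|^2$, so $s_2$ is a minimizer; and equality forces $\Dop g\equiv 0$, so $g$ is constant, and then $g(x_1)=0$ forces $g\equiv 0$, i.e.\ $f=s_2$. This establishes that $s_2$ is the unique solution.

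The obstacles here are bookkeeping rather than conceptual. One must keep in mind that $\int_\R|\Dop f|^2$ is only a \emph{seminorm} on $H^1(\R)$, with the constants in its kernel; it is precisely the presence of at least one interpolation constraint that pins down the additive constant, so uniqueness genuinely relies on $M\ge 1$. The integration by parts should be justified at the level of $H^1$, which is cleanest done piecewise on the intervals delimited by the knots, using the absolute continuity and continuity of $g$, rather than by manipulating the Dirac masses in $\Dop^2 s_2=\sum_m a_m\,\delta(\cdot-x_m)$. An alternative route --- direct method for existence (a minimizing sequence is bounded in $H^1(\R)$ because $|f(0)|\le|y_1|+|x_1|^{1/2}\|\Dop f\|_{L_2}$, then weak sequential compactness, weak lower semicontinuity of the seminorm, and weak continuity of the $x_m$-evaluations), followed by the Euler--Lagrange identity $\langle\Dop f^{*},\Dop g\rangle_{L_2}=0$ for every $g$ with $g(x_m)=0$ together with a check that $s_2$ satisfies it --- also works, but is longer than the Pythagorean argument above.
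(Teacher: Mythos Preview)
Your argument is correct: the first-integral (Pythagorean) device is exactly the classical route to this result, and your handling of the cross term, of the tails, and of uniqueness via the single interpolation constraint is sound. Note, however, that the paper does not actually supply a proof of this proposition; it is stated as a well-known fact with references to \cite{Prenter:1975} and \cite{Wahba1990}, where precisely the orthogonality/first-integral argument you outline is the standard one. So there is nothing to compare against beyond observing that your write-up is consistent with the cited literature. One cosmetic remark: your computation of the cross term is most transparent if you simply use that $\Dop s_2$ is constant on each interval between consecutive ordered knots and zero on the tails, giving $\int\Dop s_2\,\Dop g=\sum_k c_k\big(g(x_{(k+1)})-g(x_{(k)})\big)=0$ directly; the formula $-\sum_m a_m g(x_m)$ is equivalent via Abel summation but is not needed.
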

While the result is elegant and translates into a straightforward implementation, the scheme can be cumbersome for large data sets because the
number of parameters in \eq{Eq:spline2} increases with the number of data points. 
The other limitation is that the use of
$\|\Dop f\|_{L_2}$-regularization disqualifies the simple linear solution
$f(x)=a x$, which has an infinite cost.

As one may expect, there are also direct extensions of Lemma \ref{Theo:L1spline} and Proposition \ref{Theo:L2spline} for regularized least-squares
approximations. Moreover, the distinction between the two types of solutions---smoothing splines \citep{Schoenberg1964} vs.\ adaptive regression splines \citep{Mammen1997}---is even more striking\footnote{In the least-square setting, one can adjust the strength of ${\rm TV}^{(2)}$-regularization  to control the number of knots and thereby produce solutions with $K \ll M$.} for noisy data fitting applications, which brings us back to our initial goal: the design and training of neural networks.

\subsection{Representer theorem for deep neural networks}
Our aim is to determine the optimal 
activation functions for a deep neural network in a task-dependent fashion. This problem is inherently ill-posed because activations are infinite-dimensional entities while we only have access to finite data. As in the case of interpolation, we resolve the ambiguity by imposing an appropriate form of regularization. Having singled out
${\rm TV}^{(2)}$ as the most favorable choice, we now proceed
with the enunciation of our representer theorem for deep neural networks. 
\revise{We have purposefully stated the optimization problem in a generic form that is compatible with the current practice in DNN. 
Specifically, the cost function in \eqref{Eq:DeepCost} includes a standard
data term that penalizes data misfit plus a regularization to constrain the values of the linear weights of the network (e.g., $R_\ell( \M U_\ell)=\| \M U_\ell\|^2_{\rm F}$ in the case of the popular weight-decay penalty).
The novelty is the additional optimization over the neuronal activations $\sigma_{n,\ell}$ and the insertion of the 
${\rm TV}^{(2)}$ term 
to regularize their shape.}

\begin{theorem}[${\rm TV}^{(2)}$-optimality of deep spline networks]
\label{Theo:CNNRep}
Let the $L$-layer feedforward neural network $\M f: \R^{N_0} \to \R^{N_L}$ with node descriptor $(N_0,N_1,\dots,N_L)$ take the form
\begin{align}
\label{eq:DeepNet2}
\V x \mapsto \M f(\V x)=\left(\V \sigma_L \circ \V \ell_{L}\circ \V \sigma_{L-1} \circ \cdots 
\circ \V \ell_2 \circ \V \sigma_1 \circ \V \ell_1\right)(\V x),
\end{align}
which is an alternating composition of the normalized linear transformations $\V \ell_\ell: \R^{N_{\ell-1}} \to \R^{N_{\ell}}, \V x \mapsto \M U_\ell \V x$ with linear weights $\M U_\ell=[\M u_{1,\ell}\  \cdots \  \M u_{N_\ell,\ell}]^T \in \R^{N_{\ell}\times N_{\ell-1}}$ such that
$\|\M u_{n,\ell}\|=1$ and the nonlinear activations $\V \sigma_\ell: \R^{N_{\ell}} \to \R^{N_{\ell}}, \V x
\mapsto \big(\sigma_{1,\ell}(x_1), \dots, \sigma_{N_\ell,\ell}(x_n)\big)$
with  $\sigma_{1,\ell},\dots, \sigma_{N_\ell,\ell} \in {\rm BV}^{(2)}(\R)$. Given a series of data points $(\V x_m,\V y_m)_{m=1}^M$, we then define the training problem
\begin{align}
\label{Eq:DeepCost}
\arg {\min_{(\M U_\ell), (\sigma_{n,\ell} \in {\rm BV}^{(2)}(\R))}}
\left( \sum_{m=1}^M E\big(\V y_m,\M f(\V x_m)\big) \right. 
\left. + \mu \sum_{\ell=1}^N R_\ell( \M U_\ell)+ \lambda \sum_{\ell=1,}^L \sum_{n=1}^{N_\ell} {\rm TV}^{(2)}(\sigma_{n,\ell})\right)
\end{align}
where $E: \R^{N_L} \times \R^{N_L} \to \R^+$ is an arbitrary convex error function such that $E(\V y,\V y)=0$ for any $\V y \inR^{N_\ell}$,
$R_\ell: \R^{N_\ell \times N_\ell}\to \R^+$ is some arbitrary convex cost that favors certain types of linear transformations, and $\lambda, \mu\inR^+$ are two adjustable regularization parameters.
If the solution of \eq{Eq:DeepCost} exists, then it is achieved by a deep spline network
with individual activations of the form 
\begin{align}
\label{Eq:splinenetwork}
\sigma_{n,\ell}(x)=b_{1,n,\ell} + b_{2,n,\ell}x + \sum_{k=1}^{K_{n,\ell}} a_{k,n,\ell} (x - \tau_{k,n,\ell})_+,
\end{align}
with adaptive parameters
$K_{n,\ell}\le M-2$, $\tau_{1,n,\ell}, \dots,\tau_{K_{n,\ell},n,\ell} \inR$, and $b_{1,n,\ell}, b_{2,n,\ell}, a_{1,n,\ell}$, $\dots, a_{K_{n,\ell},n,\ell} \inR$.
\end{theorem}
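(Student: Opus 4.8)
The plan is to reduce the deep, coupled optimization in \eqref{Eq:DeepCost} to a decoupled family of univariate ${\rm TV}^{(2)}$-interpolation problems, each of which is governed by Lemma \ref{Theo:L1spline}. The starting point is the observation that any candidate configuration $\big((\M U_\ell),(\sigma_{n,\ell})\big)$ is completely determined, as far as the cost functional is concerned, by (i) the linear weights $(\M U_\ell)$, and (ii) the values that each activation $\sigma_{n,\ell}$ takes at the finite set of points through which the data $\V x_m$ are actually routed. Concretely, I would first \emph{fix an arbitrary minimizer} $\big((\M U_\ell^\star),(\sigma_{n,\ell}^\star)\big)$ (which exists by hypothesis) and propagate the inputs $\V x_1,\dots,\V x_M$ through the network, recording for each neuron $(n,\ell)$ the scalar pre-activation values $t_{m,n,\ell} \eqdef \M u_{n,\ell}^{\star\,T}(\text{output of layer }\ell-1\text{ on input }\V x_m)$ for $m=1,\dots,M$. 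These are at most $M$ real numbers per neuron.

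Next I would \emph{freeze the linear weights} $(\M U_\ell^\star)$ and argue that the activations can be optimized neuron-by-neuron and layer-by-layer without changing the pre-activation samples. The key is a backward induction on $\ell$: if we only modify $\sigma_{n,\ell}$ on the set of "used" inputs in a way that preserves the values $\sigma_{n,\ell}(t_{m,n,\ell}) = \sigma_{n,\ell}^\star(t_{m,n,\ell})$ for all $m$, then the output of layer $\ell$ on each data input $\V x_m$ is unchanged, hence the pre-activations at all subsequent layers are unchanged, hence the data term $\sum_m E(\V y_m, \M f(\V x_m))$ is unchanged; the weight-regularization term $\mu\sum_\ell R_\ell(\M U_\ell)$ is also untouched since the $\M U_\ell$ are frozen. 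Therefore the only part of the cost affected by replacing $\sigma_{n,\ell}^\star$ with a new $\sigma_{n,\ell}$ satisfying the same interpolation constraints is the single summand $\lambda\,{\rm TV}^{(2)}(\sigma_{n,\ell})$. Since the minimizer is optimal, $\sigma_{n,\ell}^\star$ must itself minimize ${\rm TV}^{(2)}$ over all $f \in {\rm BV}^{(2)}(\R)$ with $f(t_{m,n,\ell}) = y_{m,n,\ell}^\star$, where $y_{m,n,\ell}^\star \eqdef \sigma_{n,\ell}^\star(t_{m,n,\ell})$; if it did not, we could strictly lower the total cost, contradicting optimality.

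Now I would invoke Lemma \ref{Theo:L1spline} on each such univariate problem. After discarding repeated abscissae (the feasibility hypothesis holds automatically, since $\sigma_{n,\ell}^\star$ is a function), we have $M' \le M$ data points, and the lemma tells us that this ${\rm TV}^{(2)}$-interpolation problem admits an extremal solution that is a nonuniform linear spline with at most $M'-2 \le M-2$ knots, i.e., of the exact form \eqref{Eq:spline}. Replacing each $\sigma_{n,\ell}^\star$ by such an extremal spline interpolant $\sigma_{n,\ell}$ yields a configuration with the same (minimal) value of the total cost, all of whose activations have the parametric form \eqref{Eq:splinenetwork} with $K_{n,\ell} \le M-2$. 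This produces the claimed deep spline network realizing the minimum. One subtlety to handle cleanly: if fewer than three distinct pre-activation values occur at some neuron, the constraint set is small enough that an affine function already interpolates (so $K_{n,\ell}=0$ and the bound is trivially satisfied), and this degenerate case should be mentioned rather than glossed over.

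The main obstacle is making the "freeze the weights and reoptimize each activation independently" step fully rigorous — in particular, verifying that the substitution is \emph{internally consistent}, meaning that after replacing $\sigma_{n,\ell}^\star$ by $\sigma_{n,\ell}$ the pre-activation sample set $\{t_{m,n,\ell}\}$ at every downstream neuron is genuinely unchanged, so that the interpolation constraints imposed on downstream activations still refer to the same abscissae. This requires a careful layered induction (the substitutions must be carried out consistently, e.g., processing neurons in order of increasing $\ell$ and noting that each layer's output-on-data is frozen before we touch the next layer) and a clear statement that the argument is not circular. A secondary point worth spelling out is that ${\rm TV}^{(2)}$ depends only on $\Dop^2\sigma$, so it is blind to the affine part $b_{1}+b_{2}x$; this is exactly why the representation \eqref{Eq:splinenetwork} carries the two free affine parameters per neuron with no cost, and why the lemma's extremal-point characterization transfers verbatim.
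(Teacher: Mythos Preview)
Your proposal is correct and follows essentially the same route as the paper: fix a minimizer, propagate the data through the network to record the per-neuron pre-activation/post-activation pairs, observe that with the weights and these sampled values frozen the data term and weight regularizer are fixed, so each $\sigma_{n,\ell}$ must solve the univariate ${\rm TV}^{(2)}$-interpolation problem at its own samples, and then invoke Lemma~\ref{Theo:L1spline}. Your handling of the degenerate case (fewer than three distinct abscissae, where an affine interpolant with $K_{n,\ell}=0$ suffices) is in fact slightly more complete than the paper's, which only singles out the all-equal case; and your emphasis on internal consistency of the substitutions is a welcome rigorization of what the paper leaves implicit---though note that because you preserve the sampled values exactly, the order in which you replace the activations is immaterial, so no genuine induction (forward or backward) is needed.
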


\begin{proof} 
Let the function $\tilde{\M f}: \R^{N_0}\to \R^{N_L}$ be a (not necessarily unique) solution of the problem summarized by \eq{Eq:DeepCost}. This solution is described by \eq{eq:DeepNet2} with some optimal choice of transformation matrices $\tilde{\M U}_{\ell}$ and pointwise nonlinearities
$\tilde{\sigma}_{n,\ell}: \R \to \R$ for $\ell=1,\dots,L$ and $n=1,\dots,N_\ell$. 

As we apply $\tilde{\M f}$ to the data point $\V x=\V x_m$ and progressively move through the layers of the network, we generate a series of vectors
$\V z_{m,\ell} \inR^{N_\ell}$, according to the following recursive definition:

\begin{itemize}
\item Initialization (input of the network):
$\tilde{\V y}_{m,0}=\V x_m$.

\item Recursive update: For $\ell=1,\dots,L$,
calculate
\begin{align}
\label{Eq:auxz}
\V z_{m,\ell}=(z_{1,m,\ell}, \dots, z_{N_{\ell},m,\ell})= \tilde{\M U}_{\ell\,} \tilde{\V y}_{m,\ell-1}
\end{align}
and construct $\tilde{\V y}_{m,\ell}=(\tilde y_{1,m,\ell},\dots,\tilde y_{N_\ell,m,\ell}) \inR^{N_\ell}$ 
with
\begin{align}
\label{Eq:auxvalues}
\tilde{y}_{n,m,\ell}=\tilde{\sigma}_{n,\ell}(z_{n,m,\ell})\quad n=1,\dots,N_\ell.
\end{align}

\end{itemize}
At the output level, we get $\tilde{\M f}(\V x_m)=\tilde{\V y}_{m,L}$ 
for $m=1,\dots,M$, which are the values that 
determine the data-fidelity part of the criterion associated with the optimal network and represented by the term $\sum_{m=1}^M E\big(\V y_m,\M f(\V x_m)\big)$ in 
\eq{Eq:DeepCost}. Likewise, the specification of the optimal linear transforms $\tilde{\M U}_1, \dots, \tilde{\M U}_L$  fixes the regularization cost $\sum_{\ell=1}^L R_\ell(\M U_\ell)$. 
Having set these quantities, we concentrate on the final element of the problem: the characterization of the ``optimal''  activations $\tilde{\sigma}_{n,\ell}: \R \to \R$ in-between the locations $z_{n,m,\ell}$ associated with the ``auxiliary'' data points $\tilde y_{n,m,\ell}=\tilde{\sigma}_{n,\ell}(z_{n,m,\ell})$, $m=1,\dots,M$. The key is to recognize that we can now consider the various activation functions individually because the variation of
$\tilde{\sigma}_{n,\ell}$ in-between data points is entirely controlled by ${\rm TV}^{(2)}(\tilde \sigma_{n,\ell})$ without any influence on the other terms of the cost functional.
Since the solution $\tilde{\M f}$ achieves the global optimum, 
we therefore have that
\begin{align*}
\tilde \sigma_{n,\ell}=\arg \min_{f \in {\rm BV}^{(2)}(\R)} \| \Dop^2 f\|_{\Spc M}\quad\mbox{   s.t.   } \quad f(z_{n,m,\ell})=\tilde y_{n,m,\ell}, \ \ m=1,\dots,M,
\end{align*}
where the ``auxiliary'' data pairs $(z_{n,m,\ell}, \tilde y_{n,m,\ell})$ are specified by \eq{Eq:auxvalues}. After this reformulation, we can apply Lemma \ref{Theo:L1spline}, which proves that, at each node $(n,\ell)$, the minimum is achieved by a nonuniform spline with a number  $K_{n,\ell}$ of knots smaller than the number of data points.

Since the hypothesis of feasibility is implicit in the construction, there is only one case not covered by Lemma \ref{Theo:L1spline}: the singular scenario where all the auxiliary data points associated to a node are equal.
Fortunately, this does not break the argument because such a configuration calls for a (zero-cost) solution of the form $b_1 + b_2 x$ (which is a special case of \eq{Eq:spline} with $K=0$), except for the twist that there are now infinitely many possibilities with $b_1+b_2 z_1=\tilde y_1$.

\end{proof}

This result translates into a computational structure where each node of the network \big(with fixed index $(n, \ell)$\big) is characterized by\begin{itemize}
\item its number $0\le K=K_{n,\ell}$ of knots (ideally, much smaller than $M$);
\item the location $\{\tau_k=\tau_{k,n,\ell}\}_{k=1}^{K_{n,\ell}}$ of these knots (equivalent to ReLU biases);
\item the expansion coefficients $b_{1,n,\ell},b_{2,n,\ell}, a_{1,n,\ell}, \ldots, a_{K,n,\ell}$,
also written as $\M b=(b_1,b_2) \inR^2$ and $\M a=(a_1, \ldots, a_{K}) \inR^{K}$ to avoid notational overload.
\end{itemize}
The fundamental point is that these parameters (including the number of knots) are data-dependent and adjusted automatically through the minimization
of \eq{Eq:DeepCost}. All this takes place during training.

\section{Interpretation and discussion}
Theorem \ref{Theo:CNNRep} tells us that we can configure a neural network optimally by restricting our attention to piecewise-linear activation functions $\sigma_{n,\ell}$, 
or {\em spline activations}, for short.  In effect, this means that the ``infinite-dimensional'' minimization problem specified by \eq{Eq:DeepCost} can be converted into a 
tractable finite-dimensional problem 
where, for each node $(n,\ell)$,  the parameters to be optimized are the number $K_{n,\ell}$ of knots, the locations $\{\tau_{k,n,\ell}\}_{k=1}^{K_{n,\ell}}$ of the spline knots, and the linear weights $b_{1,n,\ell} , b_{2,n,\ell}, a_{1,n,\ell}, \dots,$ $a_{K_{n,\ell},n,\ell} \inR$.
The enabling property is going to be \eq{Eq:TV2norm1}, which converts the continuous-domain regularization into a discrete $\ell_1$-norm.
This is consistent with the expectation that bounding the second-order total-variation 
favors solutions with sparse second derivatives---i.e., linear splines with the fewest possible number of knots. 
The idea is that $\ell_1$-minimization helps reducing the number of active coefficients $a_{k,n,\ell}$ \citep{Donoho2006b, Unser2016}. 

The other important feature is that the knots are adaptive and that they can be learned
during training using the standard backpropagation algorithm.  What is required is the derivative of the activation functions. It is given by
\begin{align}
\label{Eq:SplineD}
\sigma'_{n,\ell}(x)=b_{2,n,\ell} + \sum_{k=1}^{K_{n,\ell}} a_{k,n,\ell} \One_{[\tau_{k,n,\ell},+\infty)}(x),
\end{align}
where $\One_{[\tau,+\infty)}(x)$ is an indicator function that is zero for $x<\tau$ and $1$ otherwise. These derivatives are piecewise-constant splines with jumps of height $a_{k,n,\ell}$ at the knot locations 
$\tau_{k,n,\ell}$. By differentiating \eq{Eq:SplineD} once more, we get
\begin{align}
\label{Eq:SplineD2}
\sigma''_{n,\ell}(x)=\sum_{k=1}^{K_{n,\ell}} a_{k,n,\ell} \delta(x-\tau_{k,n,\ell}),
\end{align}
where $\delta$ is the Dirac distribution. Owing to the property that $\|\delta(\cdot-\tau_{k,n,\ell})\|_{\Spc M}=1$, we then readily deduce that
\begin{align}
\label{Eq:TV2norm1}
{\rm TV}^{(2)}\{\sigma_{n,\ell}\}=\|\sigma''_{n,\ell}\|_{\Spc M}=\sum_{k=1}^{K_{n,\ell}} | a_{k,n,\ell}|=\|\M a_{n,\ell}\|_1,
\end{align}
which converts the continuous-domain regularization into a more familiar minimum $\ell_1$-norm constraint on the underlying expansion coefficients.

\subsection{Link with existing techniques}
What is even more interesting, from a practical point of view, is that the corresponding system translates into a deep ReLU network modulo a slight modification of the standard architecture described by \eq{eq:DeepNet}. Indeed, the primary basis functions in \eq{Eq:splinenetwork} are shifted ReLUs, so that each spline activation $\sigma_{n,\ell}$ can be realized by way of a simple one-layer ReLU subnetwork with the spline knots being encoded in the biases. In particular, when the only active coefficients is
$a_{n,\ell}\eqdef a_{1,n,\ell} $ (i.e., $b_{1,n,\ell}=0 $, $b_{2,n,\ell}=0$, and $K_{n,\ell}=1$), we have a perfect equivalence with the classical deep ReLU structure described by \eq{eq:DeepNet} with $\sigma_{n,\ell}(x)=(x)_+$.
The enabling property is that
$$
(\M w^T_{n,\ell}\V x - z_{n,\ell})_+=(a_{n,\ell} \M u^T_{n,\ell}\V x - z_{n,\ell})_+=a_{n,\ell}( \M u^T_{n,\ell}\V x - \tau_{n,\ell})_+,
$$
with $\M u_{n,\ell}=\M w_{n,\ell}/\|\M w_{n,\ell}\|$, $a_{n,\ell}=\|\M w_{n,\ell}\|$ and $\tau_{n,\ell}= z_{n,\ell}/a_{n,\ell}$. Concretely, this means that, for every layer $\ell$, we can absorb the single ReLU coefficients $a_{n,\ell}, n=1,\dots,N_\ell$ into the prior linear transformation and consider unnormalized transformations $\M W_\ell=[\M w_{1,\ell} \ \dots \ \M w_{N_\ell},]^T$ \big(as in \eq{eq:CPWL}\big)
rather than the normalized ones of Theorem \ref{Theo:CNNRep} with $\M u_{n,\ell}=\M w_{n,\ell}/\|\M w_{n,\ell}\|$.

Theorem \ref{Theo:CNNRep} then suggests that the next step in complexity is to add the linear term $b_{1,n,\ell}+b_{2,n,\ell}x$ to each node, since its
regularization cost vanishes. \revise{Interestingly, the suggested configuration---that is, one ReLU plus an adjustable linear term per neuron---is equivalent to the parametric ReLU model (PReLU)
of  \cite{He2015}, which has been found to systematically outperform the baseline ReLU configuration in real-world applications. }
The other design extreme is to let $\lambda \to \infty$, in which case the whole network collapses, leading to an affine mapping of the form
$\M f(\V x)=\M W \V x - \M b$ with $\M W \inR^{N_L \times N_0}$ and $\M b \inR^{N_L}$.
More generally, the framework provides us with the possibility of controlling the number of knots (and hence the complexity of the network) through the simple adjustment of the regularization parameter 
$\lambda$, with the number of knots increasing as $\lambda\to0$.

Among the various attempts in the literature to optimize the shape of the activation functions in deep neural networks, there \revise{ is one} scheme that is remarkably close to the \revise{optimal} solution suggested by our theorem: the APL (adaptive piecewise-linear activation) framework of \cite{Agostinelli2015} 
  in which each neuron is represented as a linear combination of shifted ReLUs, with the parameter being determined during training. The only difference is that their number of ReLUs is fixed a priori and that their model does not include the linear term $b_1+b_2x$. While Agostinelli et al.'s formulation does not involve any explicit regularization, they found in their experiments that is was helpful to add some mild $\ell_2$ penalty on the ReLU coefficients (to be contrasted with the sparsity-promoting $\ell_1$-penalty that results from our theorem) to avoid numerical instability.
The good news in support of our theorem is that they report substantial improvement (9.4\% and 7.5\% relative error decrease, respectively) on state-of-the-art CNN (with fixed RELU activations) on the CIFAR-10 and CIFAR-100 classification benchmarks.

A characteristic property of deep spline networks, to be considered here as a superset of the traditional deep ReLU networks, is that they 
produce an input-output relation that is continuous and piecewise-linear (CPWL) in the following sense: the corresponding function $\M f$ is continuous $\R^{N_0}\to \R^{N_L}$;
its domain
$\R^{N_0}=\bigcup_{k=1}^K P_k$ can be partitioned into a finite set of non-overlapping convex polytopes $P_k$ over which it is 
affine \citep{Tarela1999, Wang2005}. More precisely,
$\M f(\V x)=\V f_k(\V x)$ for all $\V x \in P_k$ where $\V f_k: \R^{N_0}\to \R^{N_L}$ has the same parametric form as in \eq{eq:CPWL} .
This simply follows from the observation that $\V \sigma_{\ell}=(\sigma_{1,\ell},\dots,\sigma_{N_\ell,\ell})$, with $\sigma_{n,\ell}$ as specified by \eq{Eq:splinenetwork}, is CPWL and that the CPWL property is conserved through functional composition. In fact, the CPWL property for $N=1$ is equivalent to the function 
being 
a nonuniform spline of degree 1.

Another powerful architecture that is known to generate CPWL functions is the MaxOut network  \citep{Goodfellow2013}. There, the non-linear steps
$\V \sigma_\ell$ in \eq{eq:DeepNet} are replaced by max-pooling operations. It turns out that these operations are also expressible in terms of deep splines, as illustrated
in Figure \ref{Fig:Hinge} for the simple case where the maximum is taken over two inputs. Interestingly, this conversion requires the use of the linear term
which is absent in conventional ReLU networks. This reinforces the argument made by Goodfellow {\em et al.} concerning the capability of MaxOut to learn activation functions.

\begin{figure}
\centering
\includegraphics[width=15cm]{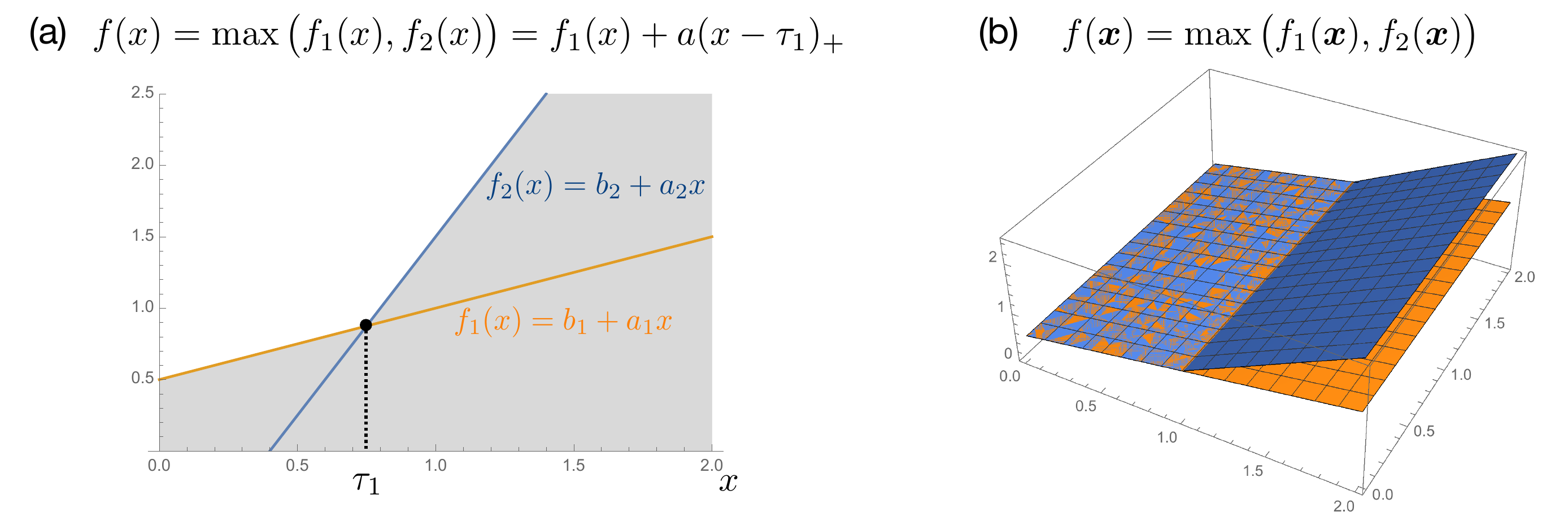}
\caption{\label{Fig:Splines} \small Deep-spline implementation of MaxOut with $N=2$. (a) In one dimension, the spline parameters are 
$a=(a_2-a_1)$ and $\tau_1=(b_2-b_1)/(a_2-a_1)$. (b) In two or more dimensions, $f(\V x)=b_1 + \V a_1^T \V x+ a (\V u^T \V x -\tau_1)_+$, where $\V u$ is the unit vector perpendicular to the hinge. \label{Fig:Hinge}}
\end{figure}

An attractive feature that is offered by the deep-spline parameterization is the possibility of suppressing a network layer---or rather, merging two adjacent ones---when the optimal solution is such that $K_{n,\ell}=0$ for $\ell$ fixed and $n=1,\dots,N_\ell$. This is a property that results from the presence of the linear component and 
has not been exploited so far.

\subsection{Generalizations}
The optimality result in Theorem \ref{Theo:CNNRep} holds for a remarkably broad family of cost functions, which should cover all cases of practical interest.
The first condition is that the data term, as its name suggest, be solely dependent on $\V y_m$ and $\V f(\V x_m)$. The second is that the regularization of the weights---the part that constrains the linear steps---and the regularization of the individual activation functions are decoupled from each others.
Obvious generalizations of the result include
\begin{itemize}
\item cases where the optimization in \eq{Eq:DeepCost} is performed over a subset of the components while
other network elements such as critical linear weights, activation functions\footnote{A prominent example is the use of the softmax function \citep{Bishop2006,Goodfellow2016} to convert the output of a neural network into a set of pseudo-probabilities.} or even pooling operators are fixed beforehand; in particular, this includes the important subclass of networks that are not fully connected;
\item configurations such as those found in convolutional networks where some (tunable) activation functions are shared among multiple nodes;
\item generalized forms of regularization
where ${\rm TV}^{(2)}(\sigma_{n,\ell})$ is substituted by $\psi\big({\rm TV}^{(2)}(\sigma_{n,\ell})\big)$, where $\psi: \R^+ \to \R^+$ is any monotonically increasing function.
\end{itemize}
While the first and second scenarios require a slight reformulation of the optimization problem, it is still possible to invoke the same kind of ``interpolation'' argument as in the proof of Theorem \ref{Theo:CNNRep}. The third generalization is obvious since the (constrained) miminization of ${\rm TV}^{(2)}(\sigma_{n,\ell})$ is equivalent to the minimization of $\psi\big({\rm TV}^{(2)}(\sigma_{n,\ell})\big)$.

\revise{The statement in Theorem \ref{Theo:CNNRep} refers to the global optimum of \eqref{Eq:DeepCost}, which is often hard to reach in practice (because the underlying problem is highly non-convex). It turns out that the argument of the proof is also applicable to local minima and/or saddle points of the cost functional.}

\revise{
By relying on the supporting mathematics in Appendix B for general spline-admissible operators $\Lop$, 
it is possible to revisit the proof of Theorem \ref{Theo:CNNRep} to determine
the parametric form of the optimal activations for higher-order versions of TV regularization; i.e., ${\rm TV}^{(n)}(\sigma)=\|\Dop^n \sigma\|_{\Spc M}$. 
This yields optimal activations that are non-uniform polynomial splines of degree $n>2$. 
While such solutions have a higher-order of differentiability, they are less favourable globally because the underlying spline property is not retained through composition, meaning that the larger the number of layers,
the larger the polynomal degree of the ``polytopes'' of the resulting network.
By contrast, the CPWL property of the linear splines in \eqref{Eq:splinenetwork} is preserved through composition, so that the resulting deep spline DNN
can be also be interpreted as a flat (or shallow) multidimensional piecewise-linear spline.
The other way of inducing CPWL activations
is through the quadratic Sobolev 1 regularization of Proposition \ref{Theo:L2spline}.
However, this solution has two shortcommings: (i) its inability to represent the identity which would result in an infinite cost, and (ii) its lack of sparsity.
}

\subsection{Comparison with kernel methods}

We like to contrast the result in Theorem \ref{Theo:CNNRep} with the classical representer theorem of machine learning \citep{Scholkopf2001}.
The commonality is that both theorems provide a parametric representation of the solution in the form of a  linear ``kernel'' expansion.
The primary distinction is that the classical representer theorem is restricted to ``shallow'' networks with $L=1$.
Yet, there is another difference even more crucial for our purpose: the fact that the knots $\tau_k$ in \eq{Eq:splinenetwork} are {\em adaptive} and few ($K \ll M)$, while the centers $\V x_m$ in \eq{Eq:Kernelf} are {\em fixed} and as numerous as there are data points in the training set.
In addition, the ReLU function $(x)_+$ is not a 
kernel in the traditional sense of the term because it is not positive-definite. We note, however, that it can be substituted by another equivalent spline generator $|x|$, which is conditionally positive-definite \citep{Micchelli1986, Wendland2005}.
Again, the property that makes this feasible is the presence of the linear term $b_{1,n,\ell} + b_{2,n,\ell}x$.

There is also a conceptual similarity between the result of Theorem \ref{Theo:CNNRep} and a recent representer theorem for deep kernel networks \citep{Bohn2018}, which results in a solution that is a composition of $L$ multi-valued kernel estimators of the classical RKHS form given by \eqref{Eq:Kernelf}. 
Again, the two main differences with the present framework are: (i) each layer of the deep kernel network is a multivariate non-linear map, which does not necessarily allow for affine transformations (e.g. linear regressions) and, (ii) the kernel expansion in each layer requires as many basis functions as there are
training data; this amounts to a total of $L\times M$ linear parameters---this can rapidly become prohibitive, not to mention the complexity of the underlying (non-convex) optimization task.
The first shortcoming can easily be fixed by inserting intermediate affine transformations in direct analogy with the type of architecture covered by Theorem \ref{Theo:CNNRep}. The second limitation is more fundamental and can probably only be removed by adopting some kind of generalized TV regularization in the spirit of \cite{Unser2017}; in short, this calls for an extension of Theorem \ref{Theo:CNNRep} for multivariate activations, which is currently work in progress.
\revise{ \subsection{Towards a practical implementation}}

\revise{ While the solution of Theorem \ref{Theo:CNNRep} is conceptually appealing, it can be expected to be harder to implement than
fixed kernel/RKHS methods since the optimization is not only over the linear weights $\V a_{n,\ell}$ and $\V b_{n,\ell}$, but also over the number and positions of the corresponding spline knots.
There is also always a risk that an increase in the number of degrees of freedom may compromise the generalization ability of the result network, which means that the method will need to be carefully tested
and validated on real data.
A possible strategy for making the optimization easier is to constrain the ReLU units to lie on a grid---in the spirit of \cite{Gupta2018}---and to then rely on standard iterative $\ell_1$-norm minimization techniques to produce a sparse solution \citep{Donoho2006b, Foucart2013,Unser2016}. 
Such a scheme may still require some explicit knot-deletion step, either as post-processing or during the training iterations, to effectively trim down the number of parameters. A potential difficulty is that
minimum ${\rm TV}^{(2)}$ interpolants are typically non-unique, because the underlying regularization is semi-convex. This means that
the solution found by an iterative algorithm---assuming that the minimum of the regularization energy is achieved---is not necessarily the sparsest one within the (convex) solution set.
Designing an algorithm that can effectively deal with this issue will be a very valuable contribution to the field.\\
} 

\section{Conclusion}
The main contribution of this work is to provide the theoretical foundations for an integrated approach to neural networks
where a sub-part of the design---the optimal shaping of activations---can be transferred to the training part of the process
and formulated as a global optimization problem. 
It also deepens the connection between splines and multilayer ReLU networks, as a pleasing side product.
While the concept seems promising and includes the theoretical possibility of suppressing unnecessary layers, it 
raises a number of issues that can only be answered through extensive experimentation with real data.
\revise{There are already strong indications in the literature (e.g., the improved performance of PReLU)
of the practical usefulness of the linear-activation component that is suggested by the theory and 
not present 
in traditional ReLU systems. } 
The next task ahead is to demonstrate the capability of \revise{more complex} spline activations to improve upon the state-of-the-art.
(Except for a potential risk of over-parameterization, deep-spline networks should perform at least as well as deep ReLU, PReLU or APL networks since the latter constitute a subset of the former.)

 
\revise{We expect the greatest challenge for training a deep spline network to be the proper optimization of the number of knots at each neuron, given that
the solution with the fewest parameters is the most desirable.}
\revise{ In short, we are still in need of a practical and efficient solution for training a deep neural network with fully adaptable activations that globally produces a continuous and piecewise-linear input-output relation; in other words, a DNN that implements an adaptive multidimensional linear spline.}
%
%

%


\appendix
\section*{Appendices}
The proof of Lemma 1 is based on some foundational results in \citep{Unser2017} that rely on compactness arguments requiring the weak* topology.
We therefore start with a brief review of the relevant notions from functional analysis (Appendix \ref{App:Background}).
We then specify the topology of ${\rm BV}^{(2)}(\R)$  in Appendix \ref{App:BVTopology} and precisely delineate its predual space in Theorem \ref{Theo:Predual}. This latter characterization is the key to the proof of Lemma \ref{Theo:L1spline} that is presented in Appendix \ref{App:ProofLemma}. 

\section{Background on continuity and weak* continuity}
\label{App:Background}
\begin{definition} Let $v: u \mapsto \langle v,u\rangle$ be a linear functional on a Banach space $\Spc U$ equipped with the norm $\|\cdot\|_{\Spc U}$. Then, $v:  \Spc U \to \R$ is said to be continuous if $\ \lim_{n\to\infty}\langle v, u_n\rangle=\langle v,u\rangle$ for any sequence $(u_n)$ in 
$\Spc U$ such that $\lim_{n\to\infty}\|u_n-u\|_{\Spc U}=0$. 
\end{definition}

We recall that $\Spc U'$ (the continuous dual of $\Spc U$) is the vector space that is formed of the linear functionals that are continuous on $\Spc U$;
it is a Banach space equipped with the dual norm
$$
\| v \|_{\Spc U'}= \sup_{\varphi \in \Spc U: \|u\|_{\Spc X}\le 1} \langle v, u\rangle.
$$
By following up on the above property, one specifies the space of linear functionals that are continuous on $\Spc U'$, which yields the Banach space $\Spc U''$.
A standard result in functional analysis is that $\Spc U$ is continuously embedded in its bidual $\Spc U''$, which is indicated as $\Spc U\embedC \Spc U''$, with the two spaces
being isometrically isomorphic (i.e., $\Spc U=\Spc U''$) if and only if $\Spc U$ is reflexive
 \citep{Rudin1991}. In other words, the construction of the bidual $\Spc U''$ gets us back to the initial space in the reflexive case only. 
 
A primary case of interest for this paper is $\ \Spc U'=\Spc M(\R)$ which is not reflexive.
For such a scenario, the proper way to deduce the {\em predual} space $\Spc U$ from $\Spc V=\Spc U'$ is through the identification of the
 linear functionals that are weak*-continuous on $\Spc V$.
 

\begin{definition}[Weak* topology]
A sequence $(v_n)_{n=1}^\infty$ in
$\Spc V=\Spc U'$ is said to converge to $v$ in the weak* topology if $\lim_{n\to\infty}\langle v_n-v, u\rangle=0\ $ for all $u \in \Spc U$.
\end{definition}
\begin{definition}[weak* continuity]
A linear functional $u:  \Spc U' \to \R$ is said to be weak*-continuous if $\ \lim_{n\to\infty}\langle u, v_n\rangle=\langle u,v\rangle$ for any sequence $(v_n)$ that converges to $v$ in the weak* topology. 
\end{definition}

\begin{proposition}[{\rm see {\citep[Theorem IV.20, p. 114]{Reed1980}}}]
\label{Prop:WeakStar}
The only weak* continuous linear functionals on $\Spc U'$ are the elements of $\Spc U$.
\end{proposition}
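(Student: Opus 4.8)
The plan is to prove the two inclusions separately, identifying throughout $\Spc U$ with its canonical image in $\Spc U''$, i.e.\ with the evaluation functionals $v\mapsto\langle v,u\rangle$ for $u\in\Spc U$. The inclusion ``every element of $\Spc U$ is weak* continuous on $\Spc U'$'' is a tautology: by the definition of the weak* topology, $v_n\to v$ in that topology means exactly $\langle v_n,u\rangle\to\langle v,u\rangle$ for every $u\in\Spc U$, which says precisely that each such evaluation functional is weak* continuous. Hence the whole content of the statement is the converse, and I would establish it in three moves.

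First, given a weak* continuous linear functional $\phi:\Spc U'\to\R$, note that $\phi$ is norm-bounded, hence lies in $\Spc U''$: otherwise one could pick $v_n$ with $\|v_n\|_{\Spc U'}=1$ and $|\phi(v_n)|\to\infty$, so that $v_n/\phi(v_n)\to 0$ in norm and therefore in the weak* topology (norm convergence trivially implies weak* convergence), forcing $\phi(v_n/\phi(v_n))\to 0$ against $\phi(v_n/\phi(v_n))=1$. Second, continuity of $\phi$ at the origin in the weak* topology yields a basic neighbourhood $V=\{v\in\Spc U':|\langle v,u_i\rangle|\le\varepsilon,\ i=1,\dots,n\}$, determined by finitely many $u_1,\dots,u_n\in\Spc U$ and some $\varepsilon>0$, on which $|\phi|\le 1$; by homogeneity, if $v$ lies in the common kernel $\bigcap_{i=1}^n\ker\langle\cdot,u_i\rangle$ then $tv\in V$ for every $t\in\R$, so $|t|\,|\phi(v)|\le 1$ for all $t$ and thus $\phi(v)=0$. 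Hence $\bigcap_{i=1}^n\ker\langle\cdot,u_i\rangle\subseteq\ker\phi$. Third, invoke the elementary fact that a linear functional vanishing on the common kernel of finitely many linear functionals is a linear combination of them (factor $\phi$ through the map $v\mapsto(\langle v,u_1\rangle,\dots,\langle v,u_n\rangle)\in\R^n$): this produces scalars $c_1,\dots,c_n$ with $\phi=\sum_i c_i\langle\cdot,u_i\rangle=\langle\cdot,\sum_i c_i u_i\rangle$, so $\phi$ is the evaluation functional of $u=\sum_i c_i u_i\in\Spc U$.

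The step I expect to be the real obstacle is topological rather than algebraic: the notion of weak* continuity as defined in the paper is the \emph{sequential} one, whereas the neighbourhood-base argument above uses the weak* topology itself, and the sequential version is a priori weaker. To close the gap I would use that in the situations where this proposition is applied the predual $\Spc U$ is separable (the relevant preduals here are built from spaces of $C_0$ type), so that by Banach--Alaoglu the closed unit ball of $\Spc U'$ is weak*-metrizable; on bounded sets the sequential and topological notions of weak* continuity then coincide, and the Krein--\v{S}mulian / Banach--Dieudonn\'e theorem promotes weak* continuity of $\phi$ from the bounded sets to all of $\Spc U'$, after which the argument of the previous paragraph applies verbatim. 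Alternatively, and more economically, one simply cites \citep[Theorem IV.20]{Reed1980}, which is exactly this statement; the boundedness step and the finite-dimensional linear algebra are routine in either case.
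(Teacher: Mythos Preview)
The paper does not prove this proposition at all: it simply cites \citep[Theorem IV.20]{Reed1980} and moves on. Your proposal is therefore strictly more than what the paper provides, and the argument you give---tautological forward inclusion, then continuity at the origin yielding a basic weak* neighbourhood, kernel inclusion, and the finite-dimensional linear-algebra step---is the standard, correct proof.

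What is genuinely valuable in your write-up, and goes beyond both the paper and the cited reference, is your observation about the sequential-versus-topological mismatch. The paper's Definition~3 defines weak* continuity \emph{sequentially}, whereas Reed--Simon's Theorem IV.20 is stated for the weak* \emph{topology}; since the weak* topology on $\Spc U'$ is typically not first-countable, sequential continuity is a priori strictly weaker, and the citation does not literally cover the paper's own definition. Your repair---separability of the relevant preduals (here $C_0(\R)$ and the finite-dimensional $\Spc N_{\V\phi}$, hence $C_{\Dop^2,\V\phi}(\R)$), weak* metrizability of the unit ball via Banach--Alaoglu, and then Krein--\v{S}mulian to pass from weak* closedness of $\ker\phi$ on balls to weak* closedness globally---is correct and is exactly the right way to close this gap for the spaces that actually arise in the paper.
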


The main point is that, despite the qualifier ``weak'', the functional property of weak* continuity is actually {\em stricter} than continuity.

In practice, it is relatively straightforward to establish the continuity of $u:\Spc V \to \R$ since the property is equivalent to the existence of a constant $C>0$ such
$$
| \langle u, v\rangle| \le C \|v\|_{\Spc V}
$$
for all $v\in \Spc V$, which also yields $\|u\|_{\Spc V'}\le C<\infty$.
By contrast, proving that $v: \Spc V \to \R$ is weak*-continuous in the non-reflexive scenario requires the precise characterization of the predual of $\Spc V$, which is typically more demanding mathematically. For instance, the property that $\Spc M(\R)=\big(C_0(\R)\big)'$ is a fundamental result  in measure theory
known as the Riesz-Markov theorem \citep{Rudin1987}. 


For example, the functionals 
$\varphi\mapsto\langle 1,\varphi\rangle$ and $\varphi\mapsto\langle \Indic_{[0,1]},\varphi\rangle$ are continuous on $\Spc M(\R)=\big(C_0(\R)\big)'$ because
the ``generalized'' functions $1$ and $ \Indic_{[0,1]}$ are bounded in the $\sup$-norm. However, they both fail to be weak*-continuous; i.e., 
$1 \notin C_0(\R)$ because it does not decay at infinity, and  $ \Indic_{[0,1]} \notin C_0(\R)$ because it is not continuous everywhere.
In the latter example, we may recover weak* continuity by considering a smoothed version of the indicator function.

These considerations are central to the proof of Lemma 1 because it requires the weak* continuity of the 
sampling functional $\delta(\cdot-x_m)$. While the sampling operation is continuous on ${\rm BV}^{(2)}(\R)$, it is not necessarily weak*-continuous; at least not in the canonical topology that is proposed in \cite[p. 780]{Unser2017} (e.g., polynomial spline example with $N_0=2$ and $\V \phi=(\delta,-\delta')$).
This is the reason why we need to revisit the construction of our native space, as detailed in Appendix \ref{App:BVTopology}, and establish a new operational criterion for testing weak* continuity (
Theorem \ref{Theo:Predual}). 

\section{Banach structure of ${\rm BV}^{(2)}(\R)$ and of its predual space}
\label{App:BVTopology}
While the definition of ${\rm BV}^{(2)}(\R)$ given in \eq{Eq:Native} is convenient for expository purposes, it is not directly usable for mathematical analysis because the functional $\|\Dop^2 f \|_{\Spc M}$ is only a semi-norm. To lift the ambiguity due to the non-trivial null space, we
select a biorthogonal system $(\V \phi, \V p)$ for $\Spc N_{\Dop^2}$ (the null space of $\Dop^2$). 
In order to fix the problem of weak* continuity (see explanations surounding Figure \ref{Fig:kernel}), our proposed modification of the canonical scheme
is $\V \phi=(\phi_1,\phi_2)=\big(\delta,-\delta+\delta(\cdot-1)\big)$ and 
$\V p=(p_1, p_2)$, with $p_1(x)=1$ and $p_2(x)=x$, which are such that
$\langle p_1,\phi_1\rangle=p_1(0)=1$, $\langle p_1,\phi_2\rangle=-p_1(0)+p_1(1)=0$, $\langle p_2,\phi_1\rangle=p_2(0)=0$ and $\langle p_2,\phi_2\rangle=-p_2(0)+p_2(1)=1$ (biorthogonality property). 
We then rely on \cite[Theorem 5]{Unser2017} to get the following characterization. 

\begin{proposition}[Banach structure of ${\rm BV}^{(2)}(\R)$]
\label{Prop:BanachNative}
Let $(\V \phi, \V p)$ be a biorthogonal system for $\Spc N_{\Dop^2}={\rm span}\{1,x\}$. Then, ${\rm BV}^{(2)}(\R)$ equipped with the norm
$$
\|f\|=\|\Dop^2 f\|_{\Spc M} + 
\revise{ \sqrt{|\langle \phi_1, f\rangle|^2 +  |\langle \phi_2, f\rangle|^2}},
$$
is a (non-reflexive) Banach space. Moreover, every $f \in {\rm BV}^{(2)}(\R)$
has the unique direct-sum decomposition
\begin{align}
\label{Eq:directsum}
f=\Op G_\V \phi\{ w\} +p,
\end{align}
where $w=\Dop^2 f \in \Spc M(\R)$, $p=\sum_{n=1}^{2} \langle f, \phi_n \rangle p_n  \in \Spc N_{\Dop^2}$,
and $\Op G_\V \phi: w \mapsto \int_\R g_{\V \phi}(\cdot,y) w(y)\dint y$, with
\begin{align}
\label{Eq:kernel}
g_{\V \phi}(x,y)&=(x-y)_+ - 
p_1(x)\langle \phi_1, (\cdot-y)_+\rangle - p_2(x)\langle \phi_2, (\cdot-y)_+\rangle.
\end{align}
\end{proposition}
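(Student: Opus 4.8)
The plan is to obtain Proposition~\ref{Prop:BanachNative} as a direct specialization of the abstract representation result \cite[Theorem~5]{Unser2017} to the operator $\Lop=\Dop^2$. In the form we need it, that theorem states: given a spline-admissible operator $\Lop$ with finite-dimensional null space $\Spc N_\Lop$ and a biorthogonal system $(\V\phi,\V p)$ for $\Spc N_\Lop$ with each $\phi_n$ bounded on the native space, the space $\{f\in\Spc S'(\R):\Lop f\in\Spc M(\R)\}$ is a Banach space for the norm $\|\Lop f\|_{\Spc M}+\|(\langle\phi_n,f\rangle)_n\|$, the map $f\mapsto\big(\Lop f,(\langle\phi_n,f\rangle)_n\big)$ is a topological isomorphism onto $\Spc M(\R)\times\R^{\dim\Spc N_\Lop}$, and consequently $f$ splits uniquely as $\Op G_\V\phi\{\Lop f\}+\sum_n\langle f,\phi_n\rangle p_n$, where $\Op G_\V\phi$ is the right inverse of $\Lop$ built from the Green's function together with the correction terms that enforce $\langle\phi_n,\Op G_\V\phi w\rangle=0$. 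So it suffices to check the hypotheses for our data.

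First, $\Dop^2$ is spline-admissible: it is a shift-invariant operator mapping $\Spc S'(\R)$ continuously into itself, its null space $\Spc N_{\Dop^2}={\rm span}\{1,x\}$ is two-dimensional, and it admits a slowly growing Green's function, namely $\rho(x)=(x)_+$, since $\Dop^2\{(x)_+\}=\Dop\{\One_{[0,+\infty)}\}=\delta$ (already noted in the main text). Second, biorthogonality of $(\V\phi,\V p)$ with $\V\phi=(\delta,\,-\delta+\delta(\cdot-1))$ and $\V p=(1,x)$ is exactly the computation $\langle p_i,\phi_j\rangle=\delta_{ij}$ carried out in the paragraph preceding the statement. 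Third, each $\phi_n$ is bounded on ${\rm BV}^{(2)}(\R)$: any $f$ with $\Dop^2 f\in\Spc M(\R)$ has $\Dop f$ of bounded variation and hence $f$ continuous and locally bounded, so $f(0)$ and $f(1)$ are well defined, and the direct-sum decomposition bounds them by $\|\Dop^2 f\|_{\Spc M}+\sqrt{|\langle\phi_1,f\rangle|^2+|\langle\phi_2,f\rangle|^2}$; this argument also confirms that the set defined by \eq{Eq:Native} coincides with the native space $\{f\in\Spc S'(\R):\Dop^2 f\in\Spc M(\R)\}$.

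With the hypotheses in hand, \cite[Theorem~5]{Unser2017} yields the stated Banach structure, the equivalence (here, equality) of the resulting norm with $\|f\|=\|\Dop^2 f\|_{\Spc M}+\sqrt{|\langle\phi_1,f\rangle|^2+|\langle\phi_2,f\rangle|^2}$, and the uniqueness of the decomposition \eq{Eq:directsum}, with $\Op G_\V\phi$ having the kernel \eq{Eq:kernel}. The two defining properties of that kernel are then a short formal verification: $\Dop^2$ annihilates $p_1,p_2$, so $\Dop^2 g_\V\phi(\cdot,y)=\Dop^2\{(\cdot-y)_+\}=\delta(\cdot-y)$ and hence $\Dop^2\Op G_\V\phi w=w$; and, by biorthogonality, $\langle\phi_n,g_\V\phi(\cdot,y)\rangle=\langle\phi_n,(\cdot-y)_+\rangle-\sum_{m=1}^2\langle\phi_m,(\cdot-y)_+\rangle\langle\phi_n,p_m\rangle=0$, so $\Op G_\V\phi w$ has no component in $\Spc N_{\Dop^2}$. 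Non-reflexivity of ${\rm BV}^{(2)}(\R)$ then follows from the isomorphism with $\Spc M(\R)\times\R^2$: the space $\Spc M(\R)=\big(C_0(\R)\big)'$ is not reflexive (since $C_0(\R)$ is not), and adjoining the finite-dimensional factor $\R^2$ cannot change that.

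The one point requiring genuine attention — and the reason we deviate from the canonical biorthogonal system $\V\phi=(\delta,-\delta')$ used in \citep{Unser2017} — is to confirm that the modified $\V\phi$ still meets Theorem~5's hypotheses, i.e.\ that $\Op G_\V\phi$ is a well-defined bounded operator $\Spc M(\R)\to{\rm BV}^{(2)}(\R)$. Concretely, one computes $\langle\phi_1,(\cdot-y)_+\rangle=(-y)_+$ and $\langle\phi_2,(\cdot-y)_+\rangle=-(-y)_+ + (1-y)_+$, and checks that for each fixed $x$ the kernel $y\mapsto g_\V\phi(x,y)$ is continuous, piecewise-linear and (crucially) compactly supported in $y$, which makes $\Op G_\V\phi w(x)=\int_\R g_\V\phi(x,y)\,w(y)\,\dint y$ meaningful for every finite measure $w\in\Spc M(\R)$; together with $\Dop^2\Op G_\V\phi w=w$ and $\langle\phi_n,\Op G_\V\phi w\rangle=0$ this even gives $\|\Op G_\V\phi w\|_{{\rm BV}^{(2)}}=\|w\|_{\Spc M}$, i.e.\ an isometric section. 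This verification — innocuous here but essential for the companion characterization of the predual (Theorem~\ref{Theo:Predual}) — is where the substance lies; everything else is bookkeeping around \cite[Theorem~5]{Unser2017}.
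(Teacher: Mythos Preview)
Your proposal is correct and follows exactly the paper's approach: the paper does not give a standalone proof of this proposition but simply states it as a specialization of \cite[Theorem~5]{Unser2017} to $\Lop=\Dop^2$ with the chosen biorthogonal system, and you have filled in the hypothesis checks (spline-admissibility, biorthogonality, well-definedness of $\Op G_{\V\phi}$ via the compactly supported kernel) that the paper either takes for granted or defers to the surrounding discussion and Figure~\ref{Fig:kernel}. Your additional remarks on non-reflexivity and on the isometric-section property are correct elaborations that the paper leaves implicit.
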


Central to our formulation  is the unique operator $\Op G_\V \phi:  \Spc M(\R) \to {\rm BV}^{(2)}(\R)$ 
such that
\begin{align}
\label{Eq:rightinv}\Dop^2 \Op G_\V \phi\{ w\}= w & \qquad(\mbox{right-inverse property})\\
\label{Eq:boundary}\langle \phi_1, \Op G_{\V \phi}\{ w\} \rangle 
=0, \quad \langle\phi_2, \Op G_\V \phi \{w\}\rangle =0 & \qquad(\mbox{boundary conditions})
\end{align}
for all $w \in \Spc M(\R)$. Specifically, \eq{Eq:boundary} ensures the orthogonality of the two components of the direct sum decomposition of $f$ in \eq{Eq:directsum}, while \eq{Eq:rightinv} and the biorthogonality of $(\V \phi,\V p)$ guarantees its unicity.
%
%

By fixing $\phi_1=\delta$ and $\phi_2=-\delta+\delta(\cdot-1)$ (finite difference), we obtain the formula of the norm for ${\rm BV}^{(2)}(\R)$ given by \eq{Eq:BV2norm}.
The corresponding expression of the kernel of $\Op G_\V \phi$ given by \eq{Eq:kernel} is
\begin{align}
g_{\V \phi}(x,y)&=(x-y)_+ - (1-x)(-y)_+  -x (1-y)_+.
\label{Eq:kernelInv}
\end{align}
A crucial observation for the proof of Lemma  \ref{Theo:L1spline} is that the function $y \mapsto g_{\V \phi}(x,y)$ specified by \eq{Eq:kernelInv} is compactly supported and bounded---in contrast with the
leading term of the expansion $(x-y)_+$ in \eq{Eq:kernel}, which represents the impulse response of the conventional shift-invariant inverse of $\Dop^2$ (two-fold integrator). In fact, these functions are continuous, triangle-shaped B-splines with the  following characteristics (see Figure \ref{Fig:kernel}).
\begin{itemize}
\item for $x \le 0$: $y \mapsto g_{\V \phi}(x,y)$ is supported in $[x,1]$ and takes its maximum at $y=0$
\item for $x\in(0,1)$: $y \mapsto g_{\V \phi}(x,y)$ is supported in $[0,1]$ and takes its extremum at $y=x$
\item for $x\ge1$: $y \mapsto g_{\V \phi}(x,y)$ is supported in $[0,x]$ and takes its maximum at $y=1$.
\end{itemize}

\begin{figure}
\centering
\includegraphics[width=8.5cm]{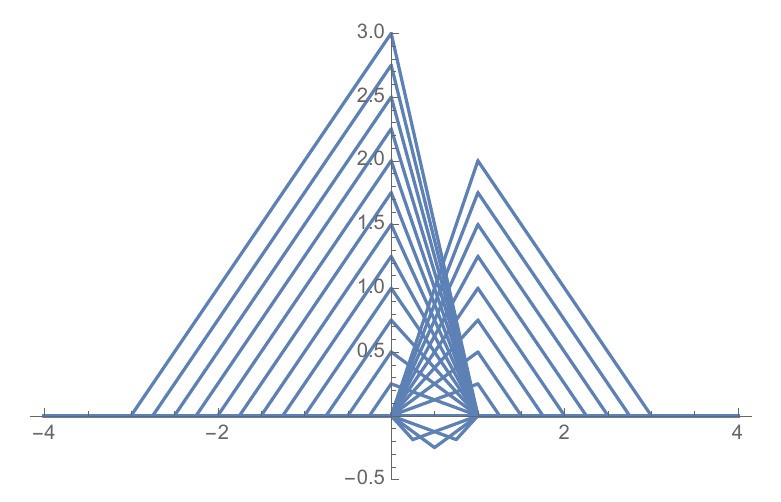}
\caption{\label{Fig:kernel} \small {Graphs of the function $y\mapsto g_{\V \phi}(x,y)$
for a series of values of $x=-3, \dots,3$ with steps of $0.25$.  
This illustrates the property that $g_{\V \phi}(x,\cdot)\in C_0(\R)$ 
for any $x\inR$, which 
is  critical to the proof of Lemma  \ref{Theo:L1spline}. By contrast, the canonical solution of \citep{Unser2017} with $\V \phi=(\delta,-\delta')$ would have resulted in a series of piecewice-linear functions with a discontinuous drop to $0$ at $x=0$.}}
\label{Fig:gphi}
\end{figure}

Since ${\rm BV}^{(2)}(\R)$ is non-reflexive, the characterization of its predual is required for testing the hypothesis of weak* continuity.
To that end, we first recall that the predual of $\Spc M(\R)=\big(C_0(\R)\big)'$ is the space $C_0(\R)$ of continuous functions that vanish at infinity equipped with the sup norm \citep{Rudin1987}. Moreover, since $\Spc S(\R)$ (Schwartz' space of smooth and rapidly-decaying functions)  is dense in $C_0(\R)$ \citep{Schwartz:1966}, the latter can also be described as the completion of $\Spc S(\R)$ equipped with the sup norm, in conformity with the definition of $\Spc M(\R)$ given by \eqref{Eq:Mbyduality}. 

We now present an explicit construction and characterization of the predual of ${\rm BV}^{(2)}(\R)$.
This description is consistent with an earlier theorem of ours \cite[Theorem 6]{Unser2017} applicable to general spline spaces; however, it contributes two novel elements: (i) the operational criterion for space membership provided by the first property,  and (ii) the construction of the predual space $C_{\Dop^2,\V \phi}(\R)$ via the completion of 
$\Spc S(\R)$, which requires additional hypotheses on $\V \phi$.

\begin{definition}
\label{Def:Admissible}
 Let $\V p=(p_1,p_2)$ be a basis of $\Spc N_{\Dop^2} ={\rm span}\{1,x\}$ and
$\V \phi=(\phi_1,\phi_2)$ a complementary set of (generalized) functions whose Fourier transforms are denoted by $\widehat\phi_1, \widehat\phi_2$. Then, the system $(\V p, \V \phi)$ 
is said to be admissible for $\Dop^2$ if
\begin{enumerate}
\item the basis functions are biorthogonal; i.e., $\langle \phi_m,p_n \rangle=\delta_{m,n}, \ (m,n=1,2)$
\item $\widehat \phi_1,\widehat \phi_2 \in L_{1,2}(\R)=\{f: \R \mapsto \R \ \ \big| \ \int_\R 
(1+|\omega|)^{-2}|f(\omega)|\dint \omega<\infty\}$ with the two functions being continuously differentiable twice at $\omega=0$.\\
\end{enumerate}
\end{definition}

\begin{theorem}[Predual of native space] 
\label{Theo:Predual}
Let $(\V \phi,\V p)$ be an admissible system in the sense of Definition \ref{Def:Admissible}.  Then,
the function space
\begin{align}
\label{Eq:weakcont}
C_{\Dop^2,\V \phi}(\R)=\{g=\Dop^2 v + a_1\phi_1+a_2 \phi_2: v \in C_{0}(\R), \V a=(a_1,a_2)\inR^2\}
\end{align}
has the following properties:
\begin{enumerate}
\item every $g \in C_{\Dop^2,\V \phi}(\R)$ has a unique direct-sum representation as in \eqref{Eq:weakcont} with $v=\Op G^{\ast}_\V \phi\{g\}$, $a_1=\langle p_1, g\rangle$, and $a_2=\langle p_2, g\rangle$, where $\Op G^\ast_\V \phi$ 
is the adjoint of $\Op G_\V \phi$ specified by \eq{Eq:kernel};
\item $C_{\Dop^2,\V \phi}(\R)$ is a (non-reflexive) Banach space equipped with the norm
\begin{align}
\label{Eq:Dualnorm}
\|g\|_{C_{\Dop^2,\V \phi}}\eqdef \revise{\|\Op G^\ast_\V \phi g\|_\infty+\|\V p(g)\|_2=\|v\|_\infty+\|\V a\|_2};
\end{align}

\item $C_{\Dop^2,\V \phi}(\R)$ is the predual of ${\rm BV}^{(2)}(\R)$ in Proposition \ref{Prop:BanachNative}; i.e.,
${\rm BV}^{(2)}(\R)=\big( C_{\Dop^2,\V \phi}(\R)\big)'$;
\item  $C_{\Dop^2,\V \phi}(\R)$ is the completion of $\Spc S(\R)$ equipped with the $\|\cdot\|_{C_{\Dop^2,\V \phi}}$-norm.
\end{enumerate}
\end{theorem}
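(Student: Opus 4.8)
The four assertions of Theorem~\ref{Theo:Predual} are of a piece: they all hinge on understanding the operator $\Op G_\V \phi$ and its adjoint $\Op G^\ast_\V \phi$. My plan is to establish them in the order (1), (2), (3), (4), bootstrapping each from the previous.

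\emph{Step 1 (direct-sum structure).} I would first verify that the map $\Op G^\ast_\V \phi$ is well defined on $\Spc S(\R)$ and in fact maps into $C_0(\R)$, using the explicit kernel formula \eqref{Eq:kernelInv}: since $y\mapsto g_{\V \phi}(x,y)$ is continuous, compactly supported, and bounded uniformly in $x$ (the trichotomy of supports listed after \eqref{Eq:kernelInv}), one checks that $x\mapsto \langle g_\V \phi(\cdot,y),g\rangle$ inherits continuity and decay. Admissibility condition~2 (the $L_{1,2}$ Fourier decay and twice-differentiability at $0$) is what guarantees $\Dop^2 v$ for $v\in C_0(\R)$ is a meaningful tempered distribution and that $\phi_1,\phi_2$ do not already lie in the range of $\Dop^2$ acting on $C_0$; this is needed for the decomposition in \eqref{Eq:weakcont} to be a genuine direct sum. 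Uniqueness of the decomposition then follows by applying $\langle p_1,\cdot\rangle$ and $\langle p_2,\cdot\rangle$: using biorthogonality $\langle \phi_m,p_n\rangle=\delta_{m,n}$ and the fact that $\langle p_n,\Dop^2 v\rangle=\langle \Dop^2 p_n, v\rangle=0$ (since $p_n\in\Spc N_{\Dop^2}$ and $v$ decays), one recovers $a_n=\langle p_n,g\rangle$ and hence $\Dop^2 v = g - a_1\phi_1 - a_2\phi_2$, forcing $v=\Op G^\ast_\V \phi g$ by the injectivity of $\Dop^2$ on $C_0(\R)$ modulo polynomials.

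\emph{Step 2 (Banach space).} With the decomposition in hand, the norm \eqref{Eq:Dualnorm} is simply the $\ell_2$-combined pullback of the $\sup$-norm on $C_0(\R)$ and the Euclidean norm on $\R^2$ through the linear isomorphism $g\mapsto(v,\V a)$; completeness is then inherited from the completeness of $C_0(\R)\oplus\R^2$, once one checks the isomorphism is bounded both ways (the forward bound is built into the norm; the reverse bound uses that $\phi_1,\phi_2$ are fixed finitely many generalized functions). Non-reflexivity follows because $C_0(\R)$ is non-reflexive and a finite-dimensional perturbation does not change that.

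\emph{Step 3 (predual identification).} This is the crux, and I expect it to be \textbf{the main obstacle}. The goal is ${\rm BV}^{(2)}(\R) = \big(C_{\Dop^2,\V \phi}(\R)\big)'$ isometrically. The natural pairing is $\langle f,g\rangle_{\rm dual} = \langle w, v\rangle + \langle \V p(f),\V p(g)\rangle$ where $f=\Op G_\V \phi w + p$ and $g=\Dop^2 v + a_1\phi_1+a_2\phi_2$; one must show this is well posed and that it identifies every element of the continuous dual of $C_{\Dop^2,\V \phi}(\R)$ with a unique $f\in{\rm BV}^{(2)}(\R)$. The forward direction (every $f$ gives a bounded functional) is a duality bound using $|\langle w,v\rangle|\le\|w\|_\Spc M\|v\|_\infty$. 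The reverse direction requires the structural fact that $C_{\Dop^2,\V \phi}(\R)\cong C_0(\R)\oplus\R^2$ and $\Spc M(\R)=\big(C_0(\R)\big)'$ (Riesz--Markov), so that $\big(C_0(\R)\oplus\R^2\big)' = \Spc M(\R)\oplus\R^2$, which must then be transported back to ${\rm BV}^{(2)}(\R)$ via the isomorphism $w\leftrightarrow f$ of Proposition~\ref{Prop:BanachNative}; the adjoint relation $\langle \Op G_\V \phi w, \Dop^2 v\rangle = \langle w, \Op G^\ast_\V \phi (\Dop^2 v)\rangle = \langle w, v\rangle$ (using the boundary conditions \eqref{Eq:boundary} and the right-inverse property \eqref{Eq:rightinv}) is the computation that makes the diagram commute. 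I would cite \cite[Theorem 6]{Unser2017} for the general abstract skeleton and supply the verification that the present $\V \phi=(\delta,-\delta+\delta(\cdot-1))$ meets its hypotheses --- precisely the content of Definition~\ref{Def:Admissible}, whose nontrivial clause (the $L_{1,2}$ Fourier condition) must be checked for $\widehat\phi_1=1$ and $\widehat\phi_2 = -1+\ee^{-\jj\omega}$, both of which are bounded and smooth, so $(1+|\omega|)^{-2}$-integrable, with the required vanishing/differentiability at $\omega=0$ coming from $\widehat\phi_2(0)=0$ and $\widehat\phi_2'(0)=-\jj$.

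\emph{Step 4 (density of $\Spc S(\R)$).} Finally, since $\Spc S(\R)$ is dense in $C_0(\R)$ in the $\sup$-norm \citep{Schwartz:1966}, and $\phi_1,\phi_2$ themselves are limits (in the $\|\cdot\|_{C_{\Dop^2,\V \phi}}$ sense) of smoothed/mollified test functions --- this is exactly the ``recover weak* continuity by smoothing the indicator'' remark made in Appendix~\ref{App:Background} --- one shows $\Spc S(\R)$ is dense in $C_{\Dop^2,\V \phi}(\R)$, so the latter is its completion in that norm. Concretely: given $g=\Dop^2 v + a_1\phi_1+a_2\phi_2$, pick $v_k\in\Spc S(\R)$ with $\|v_k-v\|_\infty\to0$ and mollifiers $\psi_{\epsilon}$ with $\Dop^2 v + a_1\phi_1+a_2\phi_2 \approx \Dop^2 v_k + a_1(\phi_1\!*\!\psi_\epsilon)+a_2(\phi_2\!*\!\psi_\epsilon)$ in the graph norm; each approximant lies in $\Spc S(\R)$ and the error is controlled by $\|v_k-v\|_\infty + |a_1|\,\|(\phi_1*\psi_\epsilon)\text{-part}\| + \cdots$, which one drives to zero. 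This step is routine once Step~3 has pinned down the norm.
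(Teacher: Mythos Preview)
Your overall architecture (direct-sum isomorphism with $C_0(\R)\oplus\R^2$, then Riesz--Markov for duality) matches the paper's, and Steps~2--3 are essentially right. But there are two genuine gaps.

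In Step~1 your kernel argument for $\Op G^\ast_\V \phi\{g\}\in C_0(\R)$ does not work as stated: the functions $y\mapsto g_\V \phi(x,y)$ are \emph{not} bounded uniformly in $x$ (the paper records $\|g_\V \phi(x,\cdot)\|_\infty\le|x|$), and the adjoint integrates over the \emph{first} slot of the kernel, so the trichotomy of supports you cite is in the wrong variable. The paper's route is different: restrict $\Op G^\ast_\V \phi$ to the subspace $\Spc S_{\V p^\perp}(\R)=\{\psi\in\Spc S:\langle p_1,\psi\rangle=\langle p_2,\psi\rangle=0\}$, on which the polynomial correction terms in \eqref{Eq:kernel} vanish and $\Op G^\ast_\V \phi$ reduces to the two-fold integrator $\Dop^{-2\ast}$; the moment conditions then cancel the $1/\omega^2$ singularity in the Fourier domain, and Riemann--Lebesgue delivers membership in $C_0$. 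The extension from $\Spc S_{\V p^\perp}$ to the full space $\Spc U=\{\Dop^2 v:v\in C_0(\R)\}$ is then carried out via the B.L.T.\ theorem, which you do not invoke.

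More seriously, you have the role of admissibility backwards. Property~4 is precisely the new content of the theorem (relative to \cite[Theorem~6]{Unser2017}), and it is where Definition~\ref{Def:Admissible} is actually used---not in the duality of Step~3. Your Step~4 is not routine: the mollified $\phi_n\ast\psi_\epsilon$ does not remain in $\Spc N_{\V\phi}$, so one must show $\|\Op G^\ast_\V \phi(\phi_n\ast\psi_\epsilon)\|_\infty\to 0$, and nothing in your sketch controls this. The paper instead introduces a second biorthogonal system $(\V\varphi,\V p)$ with $\varphi_n\in\Spc S(\R)$, decomposes $\Spc S(\R)=\Spc S_{\V p^\perp}(\R)\oplus\Spc N_{\V\varphi}$, and reduces the issue to showing $\Op G^\ast_\V \phi\{\varphi_n\}=\Dop^{-2\ast}\{\varphi_n-\phi_n\}\in C_0(\R)$; this becomes the Fourier-side condition $(\widehat\phi_n-\widehat\varphi_n)/\omega^2\in L_1(\R)$, which is exactly what the $L_{1,2}$ integrability and the twice-differentiability at the origin in Definition~\ref{Def:Admissible} are designed to guarantee. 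Placing that hypothesis in Step~3 and calling Step~4 ``routine'' means the main technical obstacle has been missed.
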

\begin{proof}: The main idea is that the  construction expressed by \eqref{Eq:weakcont} is 
the direct sum of two linear spaces, $\Spc U$ and $\Spc N_{\V \phi}$,
whose Banach topology and completion properties are revealed next.\\[1ex]
 {\em (i): Topology of the space $\Spc N_{\V \phi}={\rm span}\{\phi_1,\phi_2\}$ and of its dual}\\
This 
space collects the two last components of $g$ in \eqref{Eq:weakcont} \revise{and is equipped with the discrete $\ell_2$-norm
$\|\phi\|_{\Spc N_{\V \phi}}=\|\M a\|_2$
with $\M a= \V p(\phi)=(\langle p_1,\phi\rangle, \langle p_2,\phi\rangle)$. We also }specify the projection operator $
C_{\Dop^2,\V \phi}(\R)\to\Spc N_{\V \phi}$:
$$
{\rm Proj}_{\Spc N_{\V \phi}}\{g\}=
\langle p_1,g\rangle\phi_1+\langle p_2,g\rangle\phi_2.
$$
\revise{The complementary space is 
$\Spc N_{\V p}\eqdef {\rm span}\{p_1,p_2\}$
equipped with the norm
$
\|p\|_{\Spc N_{\V p}}=\|\V p(p)\|_2=\|\M b\|_2
$
with $\M b=\V \phi(p)=(\langle \phi_1,p\rangle,\langle \phi_2,p\rangle)$.
Thanks to the biorthogonality of $\V \phi$ and $\V p$, for all
$p={b_1 p_1 + b_2 p_2} \in \Spc N_{\V p}$,
we have that 
$$
\|p\|_{\Spc N'_{\V \phi}}=\sup_{\phi \in \Spc N_{\V \phi}: \|\phi\|_{\Spc N_{\V \phi}}\le1}\langle \phi ,p\rangle=\sup_{\M a\inR^2: \|\M a\|_2 \le1}\M a^T \M b=\|\M b\|_2=\|p\|_{\Spc N_{\V p}},
$$
which shows that $\Spc N_{\V p}=\Spc N'_{\V \phi}$ is the continuous dual
of $\Spc N'_{\V \phi}$.}\\[1ex]
{\em (ii): Range of the operator $\Op G^{\ast} _\V \phi$}\\
To derive the required properties, we restrict the domain of $\Op G^{\ast} _\V \phi$  to the 
subspace
\begin{align*}
\Spc S_{\V p^\perp}(\R)&\eqdef\big\{\psi\in \Spc S(\R) : \langle p_1,\psi\rangle=0, \langle p_2,\psi\rangle=0\} \subset \Spc S(\R).
\end{align*}
By using the explicit form \eqref{Eq:kernel} of the kernel of $\Op G _\V \phi$, we find that, for any $\psi \in 
\Spc S_{\V p^\perp}(\R)$,
\begin{align}
\Op G^{\ast} _\V \phi \{\psi\}(x)&= \int_\R \big((y-x)_+ -q_1(x)p_1(y) -q_2(x)p_2(y) \big)\psi(y)\dint y\nonumber\\
&=\int_\R (y-x)_+\psi(y)\dint y-q_1(x)\underbrace{\langle p_1,\psi\rangle}_{=0}  -q_2(x)
\underbrace{\langle p_2,\psi\rangle}_{=0} \nonumber\\
&=\Dop^{-2\ast}\{\psi\}(x) \label{Eq:SimpInv}
\end{align}
where $q_n(y)=\langle \phi_n, (\cdot-y)_+\rangle$
for $n=1,2$, and $\Dop^{-2\ast}$ is the 2-fold 
(adjoint) integration operator whose frequency response is 
$-1/\omega^2-\jj \pi \delta'(\omega)=\Fourier\{(-x)_+\}(\omega)$. 
Based on \eqref{Eq:SimpInv}, we then show that
\begin{align}
\label{Eq:ContinuousG}
\forall \psi \in \Spc S_{\V p^\perp}(\R): \quad \Op G^{\ast} _\V \phi \{\psi\} \in C_0(\R),
\end{align}
which, as we shall see, 
implies the \revise{boundedness} of $\Op G^{\ast} _\V \phi: \Spc S_{\V p^\perp}(\R)\to C_0(\R)\embedC \Spc S'(\R)$. Property \eqref{Eq:ContinuousG} is established by examining the Fourier transform\footnote{We use the product rule $\psi(\cdot)\delta'=\psi(0)\delta'-\psi'(0)\delta$, which follows from the definition of the distribution $\delta': \varphi\mapsto \langle \delta',\varphi\rangle=-\varphi'(0)$.} of $f=\Op D^{-2\ast} \{\psi\}$:
\begin{align}
\label{Eq:f}
\hat f(\omega)=-\widehat \psi(\omega)/\omega^2 -\jj \pi \big(\widehat \psi(0)\delta'(\omega) -\widehat \psi^{(1)}(0)\delta(\omega)  \big)
\end{align}
 with $\widehat \psi=\Fourier\{\psi\}\in \Spc S(\R)$.
Since $\V p(\psi)=\V 0 \Leftrightarrow\widehat \psi(0)=\widehat \psi^{(1)}(0)=0$, we first simplify
\eqref{Eq:f} to $\hat f(\omega)=-\widehat \psi(\omega)/\omega^2$ and then invoke a Taylor series argument to deduce the
continuity of  $\hat f(\omega)$ at $\omega=0$.
This, together
with the boundedness and rapid decay of $\widehat \psi(\omega)$, implies that  $\hat f \in L_1(\R)$.
The announced result---i.e., the continuity, boundedness and decay of $f(x)$ at infinity---then follows from the Riemann-Lebesgue lemma.\\[1ex]
{\em (iii): The Banach topology of $\Spc U$}\\
The definition of $\Spc U$, which corresponds to the first component in \eqref{Eq:weakcont}, is 
$$\ \Spc U\eqdef \{f=\Dop^2 v: v \in C_0(\R)\},$$
equipped with the norm $\|\Dop^2 v\|_\Spc U=\|v\|_\infty$, which establishes an isometric isomorphism with $C_0(\R)$. Our intend now is to prove that $\|f\|_\Spc U=\|\Op G^{\ast}_\V \phi f \|_\infty$ for all $f \in \Spc U$, which is equivalent to showing that $\Op G^{\ast}_\V \phi$ is the inverse of
$\Dop^2: C_0(\R) \to \Spc U$.

We shall achieve this through an extension process that builds upon the 
properties of 
the operator $\Op G^{\ast}_\V \phi$ established in Step ({\em ii}).
We start by considering the semi-norm $\psi\mapsto \|\psi\|_{\tilde{\Spc U}}\eqdef \|\Op G^{\ast}_\V \phi \psi \|_\infty$, which is well defined
over $\Spc S_{\V p^\perp}(\R)\subset\Spc S(\R)$.
Since $\Op G^{\ast}_\V \phi\psi=\Dop^{-2\ast}\psi$ for all $\psi \in \Spc S_{\V p^\perp}(\R)$ and $\Dop^{2}\Dop^{-2\ast} \varphi= \varphi$ any $\varphi \in \Spc S(\R)$,
we have that $\|\Op G^{\ast}_\V \phi \psi \|_\infty=0 \Leftrightarrow \psi=0$, which shows
that $\|\cdot\|_{\tilde{\Spc U}}$ is a norm over $\Spc S_{\V p^\perp}(\R)$, as expected.
This allows us to rephrase the inclusion property from Step ({\em ii}) as: 
$\Op G^{\ast}_\V \phi$ isometrically maps 
$(\Spc S_{\V p^\perp}(\R),\|\cdot\|_{\tilde{\Spc U}})$ to the Banach space $(C_0(\R),\|\cdot\|_{\infty})$,
which is the form suitable for the bounded linear transformation (B.L.T.) extension theorem.

\begin{theorem}[{\rm \citet*[Theorem I.7, p. 9]{Reed1980}}]
Let $\Op G$ be a bounded linear transformation from a normed space $(\Spc X,\|\cdot\|_\Spc X)$ to a complete normed space $(\Spc Y,\|\cdot\|_\Spc Y)$. Then, $\Op G$ has a unique extension to a bounded linear transformation (with the same bound) 
from the completion
of $\Spc X$ to $(\Spc Y,\|\cdot\|_\Spc Y)$.
\end{theorem} 

Consequently, the restricted operator from Step (ii) uniquely extends to an
isometry $\Op G^{\ast}_\V \phi: \tilde{\Spc U} \to C_0(\R)$
where the Banach space $\tilde{\Spc U}$ is the completion of $\Spc S_{\V p^\perp}(\R)$
in the $\|\cdot\|_{\tilde{\Spc U}}$-norm.
The final element is that $\Dop^2 \Op G^{\ast}_\V \phi \psi =\Dop^2 \Dop^{-2\ast} \psi=\psi$ for all $\Spc S_{\V p^\perp}(\R)\subseteq \tilde {\Spc U}$, which indicates that
$\Dop^2$ is the inverse of $\Op G^{\ast}_\V \phi$ on $\Spc S_{\V p^\perp}(\R)$. Since the latter is a dense subset of $\tilde {\Spc U}$, we can extend the property to the entire space,
which ultimately proves that $\Spc U=\tilde{\Spc U}$.\\[1ex]
{\em (iv): $C_{\Dop^2,\V \phi}(\R)=\Spc U \oplus \Spc N_{\V \phi}$}\\
The inclusion $g\in C_{\Dop^2,\V \phi}(\R)$ is equivalent to   $g=f + \phi$ where $f=\Dop^2 v$ with $v\in C_0(\R)$
and $\phi=a_1\phi_1+a_2 \phi_2$. 
The components $(f,\phi)$ are retrieved as $f={\rm Proj}_{\Spc U}\{g\}=\Dop^2\Op G^{\ast}_\V \phi g$
and $\phi={\rm Proj}_{\Spc N_{\V \phi}}\{g\}$. The conditions $\Op G^\ast_\V \phi \phi=0$ and ${\rm Proj}_{\Spc N_{\V \phi}}\{\Dop^2 v\}=0$ for all $\phi\in \Spc N_\V \phi$  and $v \in C_0(\R)$ ensure that
$\Spc U \cap \Spc N_{\V \phi}=\{0\}$ so that the sum is direct. The other relevant identity from Step ({\em iii}) is $f=\Dop^2\Op G^{\ast}_\V \phi f$ for all $f\in \Spc U$. 
Consequently, $C_{\Dop^2,\V \phi}(\R)=\Spc U \oplus \Spc N_{\V \phi}$ is a Banach space equipped with the 
sum norm given by \eqref{Eq:Dualnorm}.\\[1ex]
{\em (v): ${\rm BV}^{(2)}(\R)=\big( C_{\Dop^2,\V \phi}(\R)\big)'$}\\
First, we identify the norm of $\Spc U'$ by
applying a standard duality argument: 
\begin{align*}
\|u^\ast\|_{\Spc U'}&=\sup_{u \in \Spc U: \|u\|_{\Spc U}\le1}\langle u^\ast , u\rangle=\sup_{v \in C_0(\R): \|v\|_{\infty}\le1}\langle u^\ast , \Dop^2 v\rangle\\
&=\sup_{v \in \Spc S(\R): \|v\|_{\infty}\le1}\langle \Op D^2 u^\ast , v\rangle=\|\Dop^2 u^\ast\|_{\Spc M}
\end{align*}
where we have used the identity $u=\Dop^2 v$ with $v\in C_0(\R)$ and the denseness of $\Spc S(\R)$ in $C_0(\R)$.
The dual of $C_{\Dop^2,\V \phi}(\R)$ in Step ({\em iv}) is then given by $\Spc U' \oplus \Spc N'_{\V \phi}=\Spc U' \oplus \Spc N_{\V p}$ equipped with the sum of the dual norms: $\|(u^\ast,p)\|= \|u^\ast\|_{\Spc U'}+\|p\|_{\Spc N_{\V p,1}}=\| \Dop^2 f\|_{\Spc M} + \|\V \phi(f)\|_1=\|f\|_{{\rm BV}^{(2)}}$.\\[1ex]
{\em (vi): $C_{\Dop^2,\V \phi}(\R)$ is the completion of $\Spc S(\R)$ in the  $\|\cdot\|_{C_{\Dop^2,\V \phi}}$-norm}.\\
The idea is to amend the extension technique of Step ({\em iii}) by selecting a second biorthogonal system $(\V \varphi,\V p)$ such that
$\Spc N_{\V \varphi}={\rm span}\{\varphi_1,\varphi_2\} \subset  \Spc S(\R)$. This yields
the direct-sum decomposition
of $\varphi=\tilde \psi + \tilde \phi\in \Spc S(\R)$ with  $\tilde \phi= {\rm Proj}_{\Spc N_{\V \varphi}}\{\varphi\} \in \Spc N_{\V \varphi}$
and $\tilde \psi=\varphi-\tilde \phi \in \Spc S_{\V p^\perp}(\R)$. While
we already know that $\Op G^\ast_\V \phi\tilde \psi\ \in C_0(\R)$,
the delicate point is to make sure that the same holds true
for $\Op G^\ast_\V \phi\tilde \phi$. 
Since $\tilde \phi \in {\rm span}\{\varphi_1,\varphi_2\}$, the latter requirement is equivalent to
\begin{align}
\label{Eq:InC0}
\Op G^\ast_\V \phi \{\varphi_n\}=\Op D^{-2\ast}(\Op {Id}-{\rm Proj}_{\Spc N_\V \phi}) \{ \varphi_n\}=\Op D^{-2\ast}\{ \varphi_n-\phi_n\} \in C_0(\R)
\end{align}
for $n=1,2$. With the same arguments as in Step ({\em ii}) (Riemann-Lebesgue lemma), we ensure that \eqref{Eq:InC0} is met
by imposing the Fourier-domain condition
\begin{align}
\label{Eq:InC0b}
\frac{\widehat{\phi}_n(\omega)-\widehat{\varphi}_n(\omega)}{\omega^2} \in L_1(\R),
\end{align}
which results from the second hypothesis in Definition \ref{Def:Admissible}.
In effect, the role of $\widehat{\varphi}_n\in \Spc S(\R)$ in \eqref{Eq:InC0b} is to temper the singularity of $1/\omega^2$ at the origin, thanks to the condition $\V p(\varphi_n-\phi_n)=\V 0$, which induces a second-order zero in the numerator---this correction does not impact integrability otherwise because
of the rapid decay of $\widehat{\varphi}_n$. 

Having established that $\Op G^\ast_\V \phi \{\tilde \psi+\tilde \phi\}\in C_0(\R)$,
we can now check that
$$
\|\varphi\|_{C_{\Dop^2,\V \phi}}=\|\Op G^\ast_\V \phi \{\tilde \psi+\tilde \phi\} \|_\infty+ \|\V p(\tilde \phi)\|_2=0 \Leftrightarrow (\tilde \psi,\tilde \phi)=(0,0) \Leftrightarrow \varphi=0,
$$
which proves that $\|\cdot\|_{C_{\Dop^2,\V \phi}}$ is a valid norm over $\Spc S(\R)=\Spc S_{\V p^\perp}(\R)\oplus \Spc N_{\V \varphi}$.
We then deduce the desired completion result from the B.L.T. theorem by observing that
$\Op G^\ast_\V \phi: (\tilde \psi,\tilde \phi)\mapsto \Op G^\ast_\V \phi\{\tilde \psi +\tilde \phi\}$ is bounded from
$(\Spc S(\R), \|\cdot\|_{C_{\Dop^2,\V \phi}})$ to $(C_0(\R),\|\cdot\|_\infty)$. (The 
boundedness of the operator simply follows from the inequality
\begin{align*}
\|\Op G^\ast_\V \phi \varphi\|_\infty\le \|\varphi\|_{C_{\Dop^2,\V \phi}}= \|\Op G^\ast_\V \phi \varphi\|_\infty+\|\V p(\varphi)\|_2<\infty
\end{align*}
for any $\varphi \in \Spc S(\R)$.)
\end{proof}

By considering the dual form of Property 4 in Theorem \ref{Theo:Predual} (which is a new result to the best of our knowledge), we obtain an alternative, self-contained definition of our native space as
\begin{align}
\label{Eq:BVbyduality}
{\rm BV^{(2)}}(\R)=\{f \in \Spc S'(\R):{ \sup_{\varphi \in \Spc S(\R): \|\varphi\|_{C_{\Dop^2,\V \phi}}\le1}\langle f ,\varphi\rangle}<\infty\}
\end{align}
which is the direct analog of \eqref{Eq:Mbyduality}. Property 4 
actually tells that $\Spc S(\R) \embedC C_{\Dop^2,\V \phi}(\R)$ with the embedding being dense.
This, together with the observation that $C_{\Dop^2,\V \phi}(\R)\embedC \Spc S'(\R^d)$,
 implies that $\Spc S(\R) \embedC {\rm BV^{(2)}}(\R)\embedC \Spc S'(\R)$ (by duality)
 with the outer embedding being dense since $\Spc S(\R)$ is itself dense in $\Spc S'(\R)$.
In effect, this means that any ``generalized'' function---and, a fortiori, any continuous function $f: \R \to \R$---can be approximated to an arbitrary precision by a member of ${\rm BV}^{(2)}(\R)$.

Another interesting observation is that the ``canonical'' choice 
$\V \phi=(\delta,-\delta')$ from \citep{Unser2017} does not fulfil the second condition in Definition \ref{Def:Admissible} (it actually fails by a tiny margin because $-(\jj \omega)$ is only in $L_{1,2+\epsilon}(\R)$ for any $\epsilon>0$). This means that Property 4 does not apply to that particular case, even though the underlying native spaces are hardly distinguishable as sets. 
The only significant difference is in the specification of the corresponding weak* topology, which is essential to the proof of Lemma 1.

\section{Proof of Lemma \ref{Theo:L1spline}}
\label{App:ProofLemma}
\begin{proof} 
The lemma is deduced from \cite[Theorem 4]{Unser2017}: an abstract optimality result for generalized spline interpolation   that holds for an extended class of admissible regularization operators $\Lop$ and for arbitrary linear functionals ($\nu_m: f\mapsto\langle\nu_m,f \rangle$), subject to the weak* continuity requirement. 
The relevant version of the result for functions $f: \R \to \R$ is restated here in the explicit form of Theorem \ref{Theo:L1spline2}. 

The maximal polynomial rate of growth ($n_0$) of functions is controlled via their inclusion in the space
$$\revise{L_{\infty,n_0}(\R)=\{f: \R \to \R \ \mbox{ s.t.}\  \|f\|_{\infty,n_0}\eqdef \esssup_{x\inR}(1+|x|)^{-n_0}|f(x)| <\infty\}.}
$$

 \begin{definition} [Spline-admissible operator]
\label{Def:splineadmis}
A linear operator $\Op L: \Spc M_\Lop(\R) \to \Spc M(\R)$, where $\Spc M_\Lop(\R)\supset \Spc S(\R)$ is an appropriate subspace of $\Spc S'(\R)$, is called {\em spline-admis\-sible}
if 
\begin{enumerate}
\item it is shift-invariant;
\item there exists a function $\rho_{\Op L}: \R \to\R $ of slow growth (the Green's function of $\Op L$) such that
$\Lop\{\rho_{\Op L}\}=\delta$, where $\delta$ is the Dirac impulse. The rate of polynomial growth of 
$\rho_{\Op L}$ is $n_0=\inf\{n \in \N: \rho_\Lop \in L_{\infty,n}(\R)\}$.
\item the (growth-restricted) null space of $\Lop$, $$\Spc N_\Lop=\{q \in L_{\infty,n_0}(\R): \Lop\{q\}=0\},$$
has the finite dimension $N_0\ge0$.
\end{enumerate}
\end{definition}
The native space of $\Lop$, $\Spc M_{\Lop}(\R)$, is then identified as
\begin{align}
\label{Eq:native}
\Spc M_{\Lop}(\R)=\{f \in L_{\infty,n_0}(\R): \|\Lop f\|_{\Spc M}<\infty\}.
\end{align}
In addition, it is assumed that $\Spc M_{\Lop}(\R)$ is equipped with an appropriate Banach topology which gives a concrete meaning to the underlying notion of (weak*-) continuity.

\revise{ As expected, the operator $\Lop=\Dop^2$ is spline-admissible:
Its causal Green's function is $\rho_{\Dop^2}(x)=(x)_+$ (ReLU) which exhibits the algebraic rate of growth $n_0=1$, while its null space $\Spc N_{\Dop^2}={\rm span}\{p_1, p_2\}$ with
$p_1(x)=1$ and $p_2(x)=x$ is finite-dimensional with $N_0=2$.
These are precisely the basis functions associated with $\Lop$ that appear in \eq{Eq:spline}.}

\revise{ We now show that the slow growth condition with $n_0=1$ is implicit
in the specification of ${\rm BV}^{(2)}(\R)$ given by \eqref{Eq:Native} and/or Proposition \ref{Prop:BanachNative} so that our definition of the native space 
is consistent with \eqref{Eq:native}. 
\begin{proposition}
\label{Prop:NativeEmbbed}
With the choice of topology specified in Appendix B,
${\rm BV}^{(2)}(\R) \embedC L_{\infty,1}(\R)$, while
$$f \in {\rm BV}^{(2)}(\R) \quad \Leftrightarrow \quad {\rm TV}^{(2)}(f)\eqdef \sup_{\|\varphi\|_\infty\le 1: \varphi \in \Spc S(\R)} \langle f, \Dop^2 \varphi \rangle=\|\Dop^2 f\|_{\Spc M}<\infty.$$
\end{proposition}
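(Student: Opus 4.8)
The plan is to exploit the explicit direct-sum decomposition of Proposition~\ref{Prop:BanachNative}: every $f\in{\rm BV}^{(2)}(\R)$ splits uniquely as $f=\Op G_{\V\phi}\{w\}+p$ with $w=\Dop^2 f\in\Spc M(\R)$ and $p=\langle f,\phi_1\rangle p_1+\langle f,\phi_2\rangle p_2\in\Spc N_{\Dop^2}={\rm span}\{1,x\}$. Since here $\phi_1=\delta$ and $\phi_2=-\delta+\delta(\cdot-1)$, one has $\langle f,\phi_1\rangle=f(0)$ and $\langle f,\phi_2\rangle=f(1)-f(0)$, so the polynomial part obeys $\|p\|_{\infty,1}\le|\langle f,\phi_1\rangle|+|\langle f,\phi_2\rangle|\le\sqrt 2\,\sqrt{|f(0)|^2+|f(1)-f(0)|^2}$, i.e.\ a constant times the anchoring term of the norm \eqref{Eq:BV2norm}. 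It then suffices to control the ``spline part'' $\Op G_{\V\phi}\{w\}$, and the key is the uniform kernel bound $\sup_{y\inR}|g_{\V\phi}(x,y)|\le 1+|x|$, which I would read off directly from the explicit formula \eqref{Eq:kernelInv} and the triangle-B-spline case analysis recorded just after it (Figure~\ref{Fig:kernel}): the maximum of $y\mapsto|g_{\V\phi}(x,y)|$ equals $|x|$ for $x\le0$ (at $y=0$), equals $x(1-x)\le\tfrac14$ for $x\in(0,1)$ (at $y=x$), and equals $x-1\le|x|$ for $x\ge1$ (at $y=1$). The structural fact, already stressed in the construction, is that each $g_{\V\phi}(x,\cdot)$ is continuous and compactly supported, hence lies in $C_0(\R)$, the predual of $\Spc M(\R)$; therefore $\Op G_{\V\phi}\{w\}(x)=\int_\R g_{\V\phi}(x,y)\,w(y)\dint y$ is a legitimate $\Spc M$--$C_0$ pairing with $|\Op G_{\V\phi}\{w\}(x)|\le\|w\|_{\Spc M}\,\|g_{\V\phi}(x,\cdot)\|_\infty\le(1+|x|)\,\|\Dop^2 f\|_{\Spc M}$. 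Dividing by $(1+|x|)$ gives $\|\Op G_{\V\phi}\{w\}\|_{\infty,1}\le\|\Dop^2 f\|_{\Spc M}$, and combining with the polynomial estimate yields $\|f\|_{\infty,1}\le\sqrt 2\,\|f\|_{{\rm BV}^{(2)}}$, which is the claimed continuous embedding ${\rm BV}^{(2)}(\R)\embedC L_{\infty,1}(\R)$.

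For the stated equivalence, I would first observe that the middle equality ${\rm TV}^{(2)}(f)=\|\Dop^2 f\|_{\Spc M}$ is essentially definitional: for $f\in\Spc S'(\R)$, self-adjointness of $\Dop^2$ gives $\langle f,\Dop^2\varphi\rangle=\langle \Dop^2 f,\varphi\rangle$, so $\sup_{\varphi\in\Spc S(\R),\,\|\varphi\|_\infty\le1}\langle f,\Dop^2\varphi\rangle=\sup_{\varphi\in\Spc S(\R),\,\|\varphi\|_\infty\le1}\langle \Dop^2 f,\varphi\rangle=\|\Dop^2 f\|_{\Spc M}$ by the defining formula \eqref{Eq:Mbyduality}. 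The implication $f\in{\rm BV}^{(2)}(\R)\Rightarrow{\rm TV}^{(2)}(f)<\infty$ is then immediate from \eqref{Eq:Native}. Conversely, if ${\rm TV}^{(2)}(f)<\infty$ then $\varphi\mapsto\langle\Dop^2 f,\varphi\rangle$ is bounded on $(\Spc S(\R),\|\cdot\|_\infty)$, hence extends by density of $\Spc S(\R)$ in $C_0(\R)$ to a bounded linear functional on $C_0(\R)$; by the Riesz--Markov theorem this functional is a finite measure $w\in\Spc M(\R)$ with $\|w\|_{\Spc M}={\rm TV}^{(2)}(f)$, so $\Dop^2 f=w\in\Spc M(\R)$ and $f\in{\rm BV}^{(2)}(\R)$. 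Moreover, using the right-inverse property \eqref{Eq:rightinv}, $\Dop^2\big(f-\Op G_{\V\phi}\{w\}\big)=0$ on $\Spc S'(\R)$, so $f-\Op G_{\V\phi}\{w\}$ is an affine polynomial; hence $f=\Op G_{\V\phi}\{w\}+p\in L_{\infty,1}(\R)$ has a continuous representative and its norm \eqref{Eq:BV2norm} is finite.

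No deep new ingredient is needed; the only points requiring some care are the kernel bound $\sup_{y}|g_{\V\phi}(x,y)|\le 1+|x|$ (a short computation from \eqref{Eq:kernelInv} via the three-case analysis above) and, conceptually, the fact that the bounded, compactly supported, continuous kernel $g_{\V\phi}(x,\cdot)$ is a valid test element against $w\in\Spc M(\R)=\big(C_0(\R)\big)'$---this is exactly the property for which the nonstandard biorthogonal system $\V\phi=(\delta,-\delta+\delta(\cdot-1))$ was introduced in Appendix~B, in contrast with the canonical $\V\phi=(\delta,-\delta')$ whose kernel $(x-y)_+$ grows linearly in $y$. I expect the main (minor) obstacle to be the bookkeeping about growth classes: pinning down that the a priori domain of $f$ is $\Spc S'(\R)$ so that $\Dop^2 f$ and $\langle f,\Dop^2\varphi\rangle$ are well defined, and then recording that $f\in L_{\infty,1}(\R)$ forces the polynomial growth rate to be $n_0=1$, which is what reconciles \eqref{Eq:Native} with the abstract native-space definition \eqref{Eq:native} needed to apply the results of \citep{Unser2017}.
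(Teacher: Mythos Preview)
Your proposal is correct and follows essentially the same route as the paper: the direct-sum decomposition $f=\Op G_{\V\phi}\{w\}+p$ from Proposition~\ref{Prop:BanachNative}, the kernel bound $\|g_{\V\phi}(x,\cdot)\|_\infty\le C(1+|x|)$ read off from \eqref{Eq:kernelInv}, and the triangle inequality to conclude the embedding. The only cosmetic differences are that the paper invokes \cite[Theorem~3]{Unser2017} for the boundedness of $\Op G_{\V\phi}:\Spc M(\R)\to L_{\infty,1}(\R)$ whereas you argue it directly via the $\Spc M$--$C_0$ duality pairing (and obtain the explicit constant $\sqrt 2$), and your treatment of the equivalence via Riesz--Markov is more detailed than the paper's one-line remark.
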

\begin{proof}
The key is the bound $\|g_{\V \phi}(x,\cdot)\|_\infty\le |x|$ for any $x\inR$ (see Figure \ref{Fig:gphi} and accompanying explanations), which implies
that
$$
C_\V \phi=\esssup_{x,y \inR} (1+|x|)^{-1} |g_{\V \phi}(x,y)|<\infty.
$$
This ensure the continuity of the operator $\Op G_\V \phi: \Spc M(\R) \to L_{\infty,1}(\R)$ 
with $\|\Op G_\V \phi\|=C_\V \phi$ by \cite[Theorem 3]{Unser2017}.
Next, we use the property that any $f\in{\rm BV}^{(2)}(\R)$
admits a unique decomposition $f=\Op G_\V \phi w + p$ with $w=\Lop f\in \Spc M(\R)$
and $p=\sum_{n=1}^{2} \langle \phi_n,f\rangle p_n \in \Spc N_\V p$,
so that
\begin{align*}
\|f \|_{\infty,1}& \le \|\Op G_\V \phi w\|_{\infty,1} + \|p\|_{\infty,1} \\
& \le C_\V \phi \|w\|_{\Spc M} + \sum_{n=1}^{2} |\langle \phi_n,f\rangle|\; \|p_n\|_{\infty,1} \\
& \le C_\V \phi \|\Lop f\|_{\Spc M} + \|\V p(f)\|_2 \sum_{n=1}^{2} \|p_n\|_{\infty,1} \\
& \le \left(C_\V \phi + \sum_{n=1}^{2} \|p_n\|_{\infty,1}\right) \|f\|_{{\rm BV}^{(2)}},
\end{align*}
which proves that ${\rm BV}^{(2)}(\R)$ is continuously embedded in $L_{\infty,1}(\R)$.
The reason for using the dual definition of the ${\rm TV}^{(2)}$ semi-norm in the last statement of the proposition is that the formula remains valid for any $f \in \Spc S'(\R)$
with ${\rm TV}^{(2)}(f)=\infty \Leftrightarrow f \notin {\rm BV}^{(2)}(\R)$. Likewise,
${\rm TV}^{(2)}(f)=0 \Leftrightarrow f \in \Spc N_\V p$.
\end{proof}
}

 \begin{theorem}[Generalized spline interpolant]
\label{Theo:L1spline2}
Let us as\-sume that the following conditions are met:
\begin{enumerate}
\item The operator ${\Lop:\Spc M_{\Lop}(\R)\to\Spc M(\R)}$ is spline-admissible in the sense of Definition \ref{Def:splineadmis}. 
\item The linear measurement operator $\V \nu: f \mapsto \V \nu(f)=\big(\langle \nu_1, f\rangle, \ldots, \langle \nu_M, f\rangle\big)$ maps $\Spc M_{\Lop}(\R^d) \to \R^M$ and is weak*-continuous on $\Spc M_\Lop(\R^d)=\big( C_\Lop(\R^d)\big)'$.
\item {The recovery
problem is well-posed over the null space of $\Lop$: $\V \nu(q_1)=\V \nu(q_2) \Leftrightarrow q_1=q_2$, for any $q_1,q_2 \in \Spc N_\Lop$.}
\end{enumerate}
%
Then, the extremal points of the (feasible) generalized interpolation problem
\begin{align}
\label{eq:genproblem2}
\beta= \min_{f \in \Spc M_{\Lop}(\R)} \|\Lop f\|_{\Spc M}\quad \mbox{   s.t.   } \quad \V \nu(f)=\M y
\end{align}
are necessarily nonuniform $\Lop$-splines of the  form
\begin{align}
\label{eq:spline}
s(x)=\sum_{n=1}^{N_0} b_n p_n(x) + \sum_{k=1}^K a_k \rho_\Lop(x-\tau_k)
\end{align}
with parameters $\M b=(b_1,\ldots,b_{N_0})\in \R^{N_0}$, $K\le M-N_0$ (effective number of knots), $\{\tau_k\}_{k=1}^K$ with $\tau_k \in \R$,  and $\M a=(a_1,\ldots,a_K)\in \R^K$. Here,
$\{p_n\}_{n=1}^{N_0}$ is a basis of $\Spc N_\Lop$ and $\Lop\{\rho_\Lop\}=\delta$ so that $\beta=\|\Lop s\|_{\Spc M}=\sum_{k=1}^K |a_k|=\|\M a\|_1$. The full solution set of \eqref{eq:genproblem2} is the weak$\ast$-closed convex hull of those extremal points.
\end{theorem}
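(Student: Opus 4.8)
One route is simply to invoke \cite[Theorem 4]{Unser2017} with $d=1$; below I sketch a self-contained argument. The plan is to run the abstract ``representer'' argument in the non-reflexive dual pair $\big(C_\Lop(\R),\Spc M_\Lop(\R)\big)$ and, via the direct-sum decomposition of Proposition \ref{Prop:BanachNative}, to reduce everything to a sparse-measure recovery problem in $\Spc M(\R)=\big(C_0(\R)\big)'$. First I would substitute $f=\Op G_\V \phi\{w\}+p$ with $w=\Lop f\in\Spc M(\R)$ and $p=\sum_{n=1}^{N_0}b_n p_n\in\Spc N_\Lop$ into the data constraints. Hypothesis~2 says each $\nu_m$ is weak*-continuous, i.e.\ $\nu_m\in C_\Lop(\R)$, so by Theorem \ref{Theo:Predual} one has $\Op G^{\ast}_\V \phi\nu_m=:v_m\in C_0(\R)$, and biorthogonality gives $\langle\nu_m,f\rangle=\langle v_m,w\rangle+\sum_n a_{n,m}b_n$ with $a_{n,m}=\langle\nu_m,p_n\rangle$. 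Hypothesis~3 is exactly the statement that the $M\times N_0$ matrix with entries $a_{n,m}$ has full column rank $N_0$, so one may eliminate $b_1,\dots,b_{N_0}$ and turn the $M$ equations $\langle\nu_m,f\rangle=y_m$ into precisely $M':=M-N_0$ equations $\langle\psi_j,w\rangle=\tilde y_j$ ($j=1,\dots,M'$) with $\psi_j\in C_0(\R)$ a suitable combination of the $v_m$. Thus \eqref{eq:genproblem2} is equivalent to minimizing $\|w\|_{\Spc M}$ over the weak*-closed affine set $\Spc A=\{w\in\Spc M(\R):\langle\psi_j,w\rangle=\tilde y_j,\ j=1,\dots,M'\}$, and $\V b$ (hence $p$ and $f$) is recovered from $w$ by solving the resulting consistent full-column-rank linear system.

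Next I would settle existence and compactness. A minimizing sequence $(w_k)$ is bounded in $\Spc M(\R)$, so by Banach--Alaoglu it has a weak*-cluster point $w_0$; since the $\psi_j$ are weak*-continuous $w_0\in\Spc A$, and since the total-variation norm is weak*-lower-semicontinuous $\|w_0\|_{\Spc M}\le\beta$, so $w_0$ is a minimizer. Moreover the solution set $\Spc V=\{w\in\Spc A:\|w\|_{\Spc M}\le\beta\}$ (which coincides with $\{w\in\Spc A:\|w\|_{\Spc M}=\beta\}$ since $\beta$ is the minimum) is convex, weak*-closed and norm-bounded, hence weak*-compact and nonempty. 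By the Krein--Milman theorem it is the weak*-closed convex hull of its extreme points; since $w\mapsto f$ is an affine, weak*-bicontinuous bijection of $\Spc V$ onto the solution set of \eqref{eq:genproblem2}, the same representation transfers, and it remains only to identify the extreme points $w_{\rm e}$ of $\Spc V$.

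The heart of the proof is to show that each $w_{\rm e}$ has the form $\sum_{k=1}^K a_k\delta(\cdot-\tau_k)$ with $K\le M'$, argued by contradiction in two stages. \emph{(i) No continuous part.} Decompose $w_{\rm e}=w_{\rm a}+w_{\rm c}$ into mutually singular atomic and non-atomic parts, so $\|w_{\rm e}\|_{\Spc M}=\|w_{\rm a}\|_{\Spc M}+\|w_{\rm c}\|_{\Spc M}$. If $w_{\rm c}\ne0$ then $L^\infty(|w_{\rm c}|)$ is infinite-dimensional by non-atomicity, so the $M'+1$ linear conditions $\int h\,\psi_j\,\dint|w_{\rm c}|=0$ and $\int h\,\epsilon_{\rm c}\,\dint|w_{\rm c}|=0$ (with $\epsilon_{\rm c}$ the Hahn sign of $w_{\rm c}$) admit a nonzero solution $h$ with $\|h\|_\infty<1$; then $u:=h\,|w_{\rm c}|\ne0$ satisfies $\langle\psi_j,u\rangle=0$ and $\|w_{\rm e}\pm u\|_{\Spc M}=\|w_{\rm a}\|_{\Spc M}+\int(1\pm\epsilon_{\rm c}h)\,\dint|w_{\rm c}|=\beta$, so $w_{\rm e}\pm u\in\Spc V$ are distinct with midpoint $w_{\rm e}$, contradicting extremality. \emph{(ii) At most $M'$ atoms.} If instead $w_{\rm e}=\sum_{k=1}^K a_k\delta(\cdot-\tau_k)$ with all $a_k\ne0$ and $K>M'$, the $K$-dimensional space of measures on $\{\tau_1,\dots,\tau_K\}$ intersected with the $M'$ homogeneous conditions $\langle\psi_j,\cdot\rangle=0$ has positive dimension; picking $0\ne u=\sum_k c_k\delta(\cdot-\tau_k)$ there, for small $|t|$ the signs of $a_k+tc_k$ match those of $a_k$, so $t\mapsto\|w_{\rm e}+tu\|_{\Spc M}=\beta+t\sum_k\mathrm{sgn}(a_k)c_k$ is affine, and feasibility of $w_{\rm e}+tu$ forces it to be $\ge\beta$ near $0$, hence $\equiv\beta$, so again $w_{\rm e}\pm tu\in\Spc V$ contradicts extremality. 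Therefore $K\le M'=M-N_0$. Mapping back, $f_{\rm e}=\Op G_\V \phi\{w_{\rm e}\}+p=\sum_{k=1}^K a_k\,g_{\V \phi}(\cdot,\tau_k)+p$, and since by \eqref{Eq:kernel} each $g_{\V \phi}(\cdot,\tau_k)$ equals $\rho_\Lop(\cdot-\tau_k)$ up to an element of $\Spc N_\Lop$, absorbing those corrections into the null-space term yields \eqref{eq:spline} with $\beta=\|\Lop s\|_{\Spc M}=\|w_{\rm e}\|_{\Spc M}=\sum_k|a_k|=\|\M a\|_1$.

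I expect stage~(i) to be the genuine obstacle: ruling out a diffuse component of an extreme point needs, simultaneously, the infinite-dimensionality of $L^\infty(|w_{\rm c}|)$, the Hahn decomposition of $w_{\rm c}$, exact bookkeeping of the total-variation norm under the perturbation $u=h\,|w_{\rm c}|$, and---crucially---the fact that the reduction in the first paragraph genuinely produces test functions $\psi_j$ in the \emph{predual} $C_0(\R)$. Without hypotheses~2 and~3 the affine set $\Spc A$ need not be weak*-closed, so neither Banach--Alaoglu nor Krein--Milman would apply and the whole scheme would collapse; this is precisely why the topological work of Appendix~\ref{App:BVTopology}, establishing the weak* continuity of the sampling functionals, must be in place before the theorem can be used.
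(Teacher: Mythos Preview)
The paper does not actually prove this theorem: it is presented as a verbatim restatement, for $d=1$, of \cite[Theorem 4]{Unser2017} and is then used as a black box in the proof of Lemma~\ref{Theo:L1spline}. Your opening sentence already identifies this, so on the ``official'' route you and the paper are in complete agreement.

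The self-contained sketch you add goes beyond what the paper supplies and is essentially correct; it is in fact a faithful reconstruction of the argument in \cite{Unser2017}: use the direct-sum decomposition to rewrite the problem as constrained TV-minimization over $\Spc M(\R)$, invoke Banach--Alaoglu for existence and weak*-compactness of the solution set, apply Krein--Milman, and characterize the extreme measures by a perturbation/dimension-count showing they are purely atomic with at most $M-N_0$ atoms. One cosmetic remark: you reference Proposition~\ref{Prop:BanachNative} and Theorem~\ref{Theo:Predual}, which are the $\Lop=\Dop^2$ specializations, while Theorem~\ref{Theo:L1spline2} is stated for arbitrary spline-admissible $\Lop$; the general versions you implicitly rely on are Theorems~5 and~6 of \cite{Unser2017}. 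With that adjustment, your sketch is sound.
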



Hence, we only need to show that the underlying mathematical
hypotheses are met for the spline-admissible operator $\Lop=\Dop^2$ and $\nu_m=\delta(\cdot-x_m)$:
\begin{itemize}
\item weak* continuity of sampling functionals with respect to the topology specified in Appendix 
\ref{App:BVTopology}
with $\phi_1=\delta$ and $\phi_2=-\delta+\delta(\cdot-1)$.
\begin{proposition}
\label{Prop:samplingContinuous}
The sampling functional $\delta(\cdot-x_m): f \mapsto f(x_m)$
is weak*-continuous on ${\rm BV}^{(2)}(\R)=\big( C_{\Dop^2,\V \phi}(\R)\big)'$ for any $x_m\inR$. Moreover, it satisfies the continuity bound 
$$
|f(x_m)|=|\langle \delta(\cdot-x_m),f\rangle|\le (1 + 2 |x_m|) \ \|f\|_{{\rm BV}^{(2)}},
$$
for any $f \in {\rm BV}^{(2)}(\R)$. 

\end{proposition}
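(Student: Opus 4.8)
The plan is to obtain weak* continuity by identifying the sampling functional as an element of the predual, and then to read off the quantitative bound from the explicit predual norm. First, by Proposition~\ref{Prop:WeakStar} applied with $\Spc U = C_{\Dop^2,\V \phi}(\R)$ and $\Spc U' = {\rm BV}^{(2)}(\R)$ (Property~3 of Theorem~\ref{Theo:Predual}), the weak*-continuous linear functionals on ${\rm BV}^{(2)}(\R)$ are precisely the elements of $C_{\Dop^2,\V \phi}(\R)$. Hence it suffices to show that $\delta(\cdot - x_m) \in C_{\Dop^2,\V \phi}(\R)$, which by the defining formula \eqref{Eq:weakcont} amounts to exhibiting a $v \in C_0(\R)$ and scalars $a_1, a_2$ with $\delta(\cdot - x_m) = \Dop^2 v + a_1 \phi_1 + a_2 \phi_2$.

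For the decomposition I would take $v = g_{\V \phi}(x_m, \cdot)$, the $x_m$-section of the kernel \eqref{Eq:kernelInv}, together with $a_1 = p_1(x_m) = 1$ and $a_2 = p_2(x_m) = x_m$. Differentiating \eqref{Eq:kernelInv} twice in its second argument, using $\Dop^2\{(\cdot)_+\} = \delta$, gives $\Dop^2 v = \delta(\cdot - x_m) - (1 - x_m)\delta - x_m\,\delta(\cdot - 1)$, so that $\Dop^2 v + \phi_1 + x_m \phi_2 = \delta(\cdot - x_m)$ since $\phi_1 = \delta$ and $\phi_2 = -\delta + \delta(\cdot - 1)$. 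The one substantive point is that $v \in C_0(\R)$: the three cases listed right after \eqref{Eq:kernelInv}, together with Figure~\ref{Fig:kernel}, show that $y \mapsto g_{\V \phi}(x_m, y)$ is a continuous, compactly supported, triangle-shaped function, hence in $C_0(\R)$. With this, \eqref{Eq:weakcont} yields $\delta(\cdot - x_m) \in C_{\Dop^2,\V \phi}(\R)$, and the reduction of the first paragraph delivers the weak* continuity.

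For the bound I would simply use the duality pairing: for any $f \in {\rm BV}^{(2)}(\R) = \big(C_{\Dop^2,\V \phi}(\R)\big)'$ one has $|f(x_m)| = |\langle \delta(\cdot - x_m), f\rangle| \le \|\delta(\cdot - x_m)\|_{C_{\Dop^2,\V \phi}}\,\|f\|_{{\rm BV}^{(2)}}$. Since the decomposition constructed above is, by the uniqueness in Property~1 of Theorem~\ref{Theo:Predual}, the canonical one (so $v = \Op G^\ast_\V \phi\{\delta(\cdot-x_m)\}$ and $\V a = (1,x_m)$), the norm formula \eqref{Eq:Dualnorm} gives $\|\delta(\cdot - x_m)\|_{C_{\Dop^2,\V \phi}} = \|v\|_\infty + \|\V a\|_2 = \|g_{\V \phi}(x_m, \cdot)\|_\infty + \sqrt{1 + x_m^2}$. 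Invoking the pointwise bound $\|g_{\V \phi}(x, \cdot)\|_\infty \le |x|$ established in the proof of Proposition~\ref{Prop:NativeEmbbed}, together with $\sqrt{1 + x_m^2} \le 1 + |x_m|$, yields $\|\delta(\cdot - x_m)\|_{C_{\Dop^2,\V \phi}} \le 1 + 2|x_m|$, which is exactly the claimed continuity estimate.

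I do not expect a genuine technical obstacle here, since all the hard analysis is already packaged in Theorem~\ref{Theo:Predual} and in the elementary geometry of $g_{\V \phi}$; the crux is conceptual, namely the verification that the kernel section $g_{\V \phi}(x_m, \cdot)$ lies in $C_0(\R)$ and not merely in $L_\infty(\R)$. This is precisely the point of the redesigned topology of Appendix~\ref{App:BVTopology}: with the canonical biorthogonal system $\V \phi = (\delta, -\delta')$ the section would have a jump discontinuity at the origin and fail to belong to $C_0(\R)$, so $\delta(\cdot-x_m)$ would not be an element of the predual and the sampling functional — while still norm-continuous on ${\rm BV}^{(2)}(\R)$ — would not be weak*-continuous, which is what the representer-theorem machinery of \citep{Unser2017} requires.
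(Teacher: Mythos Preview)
Your proof is correct and follows essentially the same route as the paper: identify $\delta(\cdot-x_m)$ as an element of the predual $C_{\Dop^2,\V\phi}(\R)$ via the decomposition $\Dop^2 g_{\V\phi}(x_m,\cdot)+\phi_1+x_m\phi_2$, invoke Proposition~\ref{Prop:WeakStar}, and read off the bound from $\|g_{\V\phi}(x_m,\cdot)\|_\infty\le|x_m|$ together with $\sqrt{1+x_m^2}\le 1+|x_m|$. Your explicit verification of $\Dop^2 v$ and consistent use of the $\ell_2$ norm on $\V a$ are slightly more detailed than the paper's own argument, but the structure is identical.
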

\begin{proof} The key here is that $\Op G_{\V \phi}^{\ast}\{\delta(\cdot-x_m)\}(x)=g_{\V \phi}(x_m,x)$ where the latter kernel---defined by \eq{Eq:kernelInv}---is continuous, bounded and compactly-supported (see Figure \ref{Fig:kernel} and accompanying explanations), 
and hence vanishing at $\pm\infty$.
Consequently, $\delta(\cdot-x_m)=\Dop^2v + a_1\phi_1+a_2\phi_2$ with $v=g_{\V \phi}(x_m,\cdot)\in C_0(\R)$, $a_1=\langle 1, \delta(\cdot-x_m)\rangle=1$, and $a_2=
\langle x, \delta(\cdot-x_m)\rangle=x_m$ in accordance with \eqref{Eq:weakcont} in Theorem \ref{Theo:Predual}, which proves that $\delta(\cdot-x_m)\in C_{\Dop^2,\V \phi}(\R)$. This establishes its weak* continuity on $\big(C_{\Dop^2,\V \phi}(\R)\big)'$
(by Proposition \ref{Prop:WeakStar}). 

Based on the observation that $\|g_{\V \phi}(x_m,\cdot)\|_\infty\le |x_m|$, we then easily estimate
the norm of $\delta(\cdot-x_m)$ as
\begin{align*}
\|\delta(\cdot-x_m)\|'_{\rm BV^{(2)}}&=\sup_{y \inR} |g_{\V \phi}(x_m,y)| \ + \sup_{n=1,2}|a_n| \\
& \le | x_m| \  + 1 \ + \ |x_m|  <\infty.
\end{align*}
Finally, we recall that the property that two Banach spaces $\Spc U$ and $\Spc U'$ form a dual pair implies that
$|\langle  u,u'\rangle|\le \|u\|_{\Spc U} \|u'\|_{\Spc U'}$ for any $u \in \Spc U$ and $u' \in \Spc U'$.
Taking $\Spc U=C_{\Dop^2,\V \phi}(\R)$ and $u=\delta(\cdot-x_m)$ allows us to 
translate the above norm estimate into the announced continuity bound.
\end{proof}
 
\item Well-posedness of reconstruction  for $f \in \Spc N_{\Dop^2}={\rm span}\{1,x\}$. It is well-known that the classical linear regression problem
$$\V b=\arg \min_{b_1,b_2} \sum_{m=1}^M |y_m-  (b_1 + b_2 x_m)|^2$$ is well posed and has a unique solution if and only if $S=\{x_m\}_{m=1}^{M}$ contains at least
two distinct points, say $x_{1}\ne x_{2}$, which takes care of the final hypothesis in 
Theorem \ref{Theo:L1spline2}.
\end{itemize}
\end{proof}
\subsection*{Acknowlegdments}
The research was partially supported by the Swiss National Science
Foundation under Grant 200020-162343.
The author is thankful to Julien Fageot, Shayan Aziznejad, Anais Badoual, Kyong Hwan Jin, and Harshit Gupta for helpful discussions.

\bibliography{/GoogleDrive/Bibliography/BibTex_files/Unser}


\end{document}